\documentclass{article} 
\usepackage{iclr2027_conference,times}

\usepackage{amsmath,amsfonts,bm}

\def\eqref#1{equation~\ref{#1}}

\def\1{\bm{1}}

\DeclareMathAlphabet{\mathsfit}{\encodingdefault}{\sfdefault}{m}{sl}
\SetMathAlphabet{\mathsfit}{bold}{\encodingdefault}{\sfdefault}{bx}{n}

\usepackage{hyperref}
\usepackage{url}
\usepackage{enumitem}
\usepackage{multirow}
\usepackage[dvipsnames]{xcolor}
\usepackage{listings}
\usepackage{pifont}
\usepackage{graphicx}
\usepackage{xcolor}
\usepackage{tabularx}
\usepackage[most]{tcolorbox}
\usepackage{amssymb}

\definecolor{casegreen}{RGB}{30,140,70}
\definecolor{casered}{RGB}{210,40,40}

\newcommand{\prmgreen}[1]{%
    \textcolor{casegreen}{\textbf{[BayesianPRM: #1]}}%
}

\newcommand{\prmred}[1]{%
    \textcolor{casered}{\textbf{[BayesianPRM: #1]}}%
}

\usepackage{enumitem}
\setlist[itemize]{leftmargin=*}

\usepackage{wrapfig}
\usepackage{caption}
\usepackage{booktabs}
\usepackage{graphicx}

\usepackage{subfigure}
\usepackage{booktabs}
\usepackage{multicol}
\usepackage[table]{xcolor}  
\hypersetup{
	colorlinks=true,
    allcolors={purple}
}
\usepackage{wrapfig}
\usepackage{amssymb}
\usepackage{bbm}
\usepackage{algorithmic}
\usepackage{algorithm}
\usepackage{amsmath,amsthm,amssymb}
\usepackage{graphicx}
\usepackage{amsmath}
\usepackage{mathtools}

\newtheorem{proposition}{Proposition}[section]

\usepackage{rotating}

\usepackage{booktabs}
\usepackage{multirow}
\usepackage[table]{xcolor}

\definecolor{lightpurple}{RGB}{245,241,252}
\definecolor{improvegray}{RGB}{130,130,130}

\title{Process Reward Modeling with Reliability- and Conservatism-Aware Bayesian Belief}

\author{
Zhongjian Qiao$^{1}$,
Zhixiang Zhou$^{2}$,
Jiafei Lyu$^{3}$,
Peisong Wang$^{4}$,
Qiaosheng Zhang$^{2}$, \\
~\textbf{Siyang Gao$^{1}$},
\textbf{Shuang Qiu$^{1}$}\thanks{Corresponding Author}\\
    $^{1}$CityUHK, $^{2}$Shanghai Innovation Institute, $^{3}$Tencent, \\ $^{4}$Institute of Automation,
Chinese Academy of Sciences \\
    ~\texttt{zhongqiao2-c@my.cityu.edu.hk}, \texttt{shuanqiu@cityu.edu.hk}
}

\iclrfinalcopy 
\begin{document}

\maketitle

\begin{abstract}
Process Reward Models (PRMs) play a central role in test-time scaling for large language model (LLM) reasoning. However, PRMs may provide inaccurate process rewards for test-time scaling, which we identify as arising from two sources of uncertainty: (1) \textit{PRM uncertainty} that arises from insufficient supervision data and imperfect PRM training; and (2) \textit{policy uncertainty} that results from the mismatch between the policy to collect supervision data and the policy for test-time scaling, leading to systematic reward overestimation. Previous studies fail to jointly consider both uncertainties. To bridge this gap, we propose BayesianPRM, which treats the process reward as a random variable within a reward hypothesis space and maintains a Bayesian belief over it. Since exact Bayesian inference over the continuous hypothesis space is generally intractable, we discretize the reward hypothesis space with a reward ensemble and introduce two complementary belief updates, each targeting one source of uncertainty: (1) to address PRM uncertainty, we introduce \textit{reliability-aware belief updating} that measures how well each ensemble member explains the supervision data and updates the belief with an amortized belief network optimized by the evidence lower bound (ELBO); and (2) to tackle policy uncertainty, we propose \textit{conservatism-aware belief calibration} which further reweights the belief toward lower rewards to suppress overestimation. These two updates yield a reliability- and conservatism-aware Bayesian belief, whose expectation determines the process reward. Experiments on multimodal reasoning benchmarks indicate that BayesianPRM consistently yields more accurate process reward estimates while enhancing test-time scaling performance.
\end{abstract}

\section{Introduction}

Test-time scaling effectively unlocks the reasoning potential of large language models (LLMs)~\citep{snell2024scaling,bi2024forest}, enabling smaller models to match or even surpass larger ones on challenging mathematics and coding tasks~\citep{liu2025can,kang2026t1}. Among common approaches such as Best-of-$N$~\citep{brown2024large}, Process Reward Models (PRMs)~\citep{lightman2024let,shao2024deepseekmath,li2026process} play a central role by providing step-level estimates of the probability of eventually reaching a correct answer. However, PRMs can produce inaccurate rewards, especially on complex and out-of-distribution (OOD) tasks~\citep{park2026know,li2026process}, suggesting that test-time scaling should explicitly account for \textit{uncertainty} in process rewards. However, previous studies have not comprehensively investigated such uncertainty.

To bridge this gap, we first analyze the uncertainty in process rewards for test-time scaling and categorize it into two distinct sources. The first is \textit{PRM uncertainty}, which arises from potentially insufficient supervision data and imperfect training procedures, limiting the generalization of PRMs to unseen problems; the second is \textit{policy uncertainty}, which arises from the mismatch between the policies used for PRM training and test-time scaling. For example, the Monte-Carlo labels used for PRM training are often generated by a strong policy, causing the PRM to learn the success probabilities associated with that strong policy. However, for test-time scaling, the policy is often weaker. Consequently, the process reward could be systematically overestimated. \citet{park2026know} and \citet{ma2026distributional} first observe the reward overestimation phenomenon, and employ quantile regression~\citep{koenker2001quantile} and conditional optimal transport~\citep{bunne2022supervised} on a calibration set to address it. We refer to the two approaches as QR-PRM and OT-PRM, respectively. They primarily address policy uncertainty, but do not explicitly account for PRM uncertainty. BetaPRM~\citep{li2026process} models the process reward with a Beta distribution, thereby capturing PRM uncertainty, while leaving policy uncertainty unaddressed. These raise a natural question: \textit{can we incorporate both PRM and policy uncertainty into process reward modeling for test-time scaling?}

Inspired by Bayesian learning~\citep{neal2012bayesian}, We propose BayesianPRM to jointly manage PRM and policy uncertainty. BayesianPRM treats the process reward as a random variable and maintains a belief distribution over the plausible reward hypothesis space. Since exact Bayesian inference over the continuous reward space is generally intractable, we discretize this space with a reward ensemble. We then introduce two stages of belief updates, each tackling one of the two uncertainties. First, to address PRM uncertainty, we introduce \textit{reliability-aware belief updating}. This stage measures how faithfully each ensemble member explains the supervision data. The update is performed with an amortized belief network trained to maximize an evidence lower bound (ELBO). Second, to tackle policy uncertainty, we propose \textit{conservatism-aware belief calibration}. By reweighting the belief toward lower reward values, this stage explicitly suppresses reward overestimation. These two updates yield a reliability-
and conservatism-aware Bayesian belief, whose expectation defines the final process reward. We compare BayesianPRM and relevant recent works in Table~\ref{tab:compare}. Our contributions are summarized as follows.
\begin{itemize}
    \item We identify the uncertainty of PRMs in test-time scaling and categorize it into two sources: PRM uncertainty and policy uncertainty.
    \item We propose BayesianPRM to address PRM and policy uncertainty through a two-stage belief updating framework: reliability-aware belief updating and conservatism-aware belief calibration.
    \item Experiments across multimodal reasoning benchmarks indicate that BayesianPRM consistently yields more accurate process reward estimates while enhancing test-time scaling performance.
\end{itemize}

\section{Preliminaries}
\label{sec:preliminary}
\begin{table}[]
    \centering
    \begingroup
    \setlength{\tabcolsep}{6pt}
    \caption{Comparison between BayesianPRM and relevant recent works.}
    \begin{tabular}{c|c|c|c}
    \toprule
       &  PRM Uncertainty & Policy Uncertainty & Technique \\
    \midrule
    BetaPRM~\citep{li2026process}  & {\color{green!70!black} \ding{51}} & {\color{red} \ding{55}} & Beta Distribution \\
    QR-PRM~\citep{park2026know} & {\color{red} \ding{55}} & {\color{green!70!black}\ding{51}}  & Quantile Regression \\
    OT-PRM~\citep{ma2026distributional} & {\color{red} \ding{55}} & {\color{green!70!black} \ding{51}}  & Optimal Transport \\
    \textbf{BayesianPRM (Ours)} & {\color{green!70!black} \ding{51}} & {\color{green!70!black} \ding{51}}  & Bayesian Learning \\

    \bottomrule
    \end{tabular}
    \label{tab:compare}
    \endgroup
\end{table}

\textbf{Process Reward Models.}
For an input problem $x$, let $s_{1:T}=(s_1,\ldots,s_T)$ denote a step-by-step solution, where $s_t$ is the $t$-th reasoning step. Given a prefix $c_t=(x,s_{\leq t})$ and policy $\pi$, define
$q_t^\pi=\mathrm{Pr}(\mathrm{final\,answer\,is\,correct}\mid c_t,\pi)$
as the probability that $\pi$ successfully completes $c_t$. A PRM parameterized by $\theta$ predicts $r_\theta(c_t)\in[0,1]$ to approximate $q_t^\pi$. Since $q_t^\pi$ is unobservable, PRMs typically learn from its Monte-Carlo estimate $\hat{q}^\pi_t$. For each prefix $c_t$, we sample $N_t$ continuations from $\pi$, among which $K_t$ are successful, yielding $\hat q_t^\pi=K_t/N_t$. Letting $\mathcal{D}_{\text{train}}=\{(c_t,\hat q_t^\pi)\}_{t=1}^{T}$ denote the training dataset, PRMs typically optimize the following cross-entropy loss:
\begin{equation}
\label{eq:cross_entropy}
    \mathcal{L}_\mathrm{CE}(\theta)
    =
    -\frac{1}{|\mathcal{D}_\text{train}|}
    \sum_{(c_t,\hat{q}_t^\pi)\in\mathcal{D}_\text{train}}
    \left[
    \hat{q}_t^\pi\log r_\theta(c_t)
    +(1-\hat{q}_t^\pi)\log(1-r_\theta(c_t))
    \right].
\end{equation}

\textbf{Bayesian Learning.} Bayesian learning~\citep{neal2012bayesian} provides a principled framework for uncertainty modeling. Given an input-label pair $(x,y)$ and a predictive model $p_\theta(y|x)$, it maintains a belief distribution over hypotheses $\theta\in\Theta$. Starting from a prior $p(\theta)$, the posterior after observing data $\mathcal D$ is obtained by Bayes' rule:
\begin{equation}
\label{eq:bayes}
p(\theta|\mathcal D)
=
\frac{p(\mathcal D|\theta)p(\theta)}
{\int_\Theta p(\mathcal D|\theta')p(\theta')d\theta'}
\propto
p(\mathcal D|\theta)p(\theta).
\end{equation}
The posterior represents uncertainty over plausible hypotheses, which could be incorporated into the final prediction by marginalization:
\begin{equation}
p(y|x,\mathcal D)
=
\int_\Theta p_\theta(y|x)p(\theta|\mathcal D)d\theta.
\end{equation}

\section{Analysis of Two Sources of Uncertainty}
\label{sec:illustration}

We first distinguish two sources of uncertainty in process reward estimation:
\textit{PRM uncertainty} and \textit{policy uncertainty}.
Let $\pi_1$ denote the Monte-Carlo rollout policy used to construct the PRM
supervision data, and $\pi_2$ the policy used for test-time scaling.
For a reasoning prefix $c_t$, the PRM prediction error under $\pi_2$ can be
decomposed as
\begin{equation}
    r_\theta(c_t)-q_t^{\pi_2}
    =
    \underbrace{r_\theta(c_t)-q_t^{\pi_1}}_{\text{PRM uncertainty}}
    +
    \underbrace{q_t^{\pi_1}-q_t^{\pi_2}}_{\text{policy uncertainty}},
    \qquad \forall\, c_t ,
\end{equation}
where $q_t^{\pi_1}$ and $q_t^{\pi_2}$ are the success probability defined in Section~\ref{sec:preliminary}. The first term characterizes the PRM prediction error under the same policy
used to construct its supervision data. Such errors may arise from insufficient
supervision and imperfect training, limiting the PRM's generalization to unseen
problems. We refer to this source as \textit{PRM uncertainty}, which is related to epistemic uncertainty.
The second term captures the change in the true continuation success probability
caused by the mismatch between $\pi_1$ and $\pi_2$, which we refer to as
\textit{policy uncertainty}.

\begin{figure}
    \centering
	\subfigure[] {\includegraphics[width=.24\linewidth]{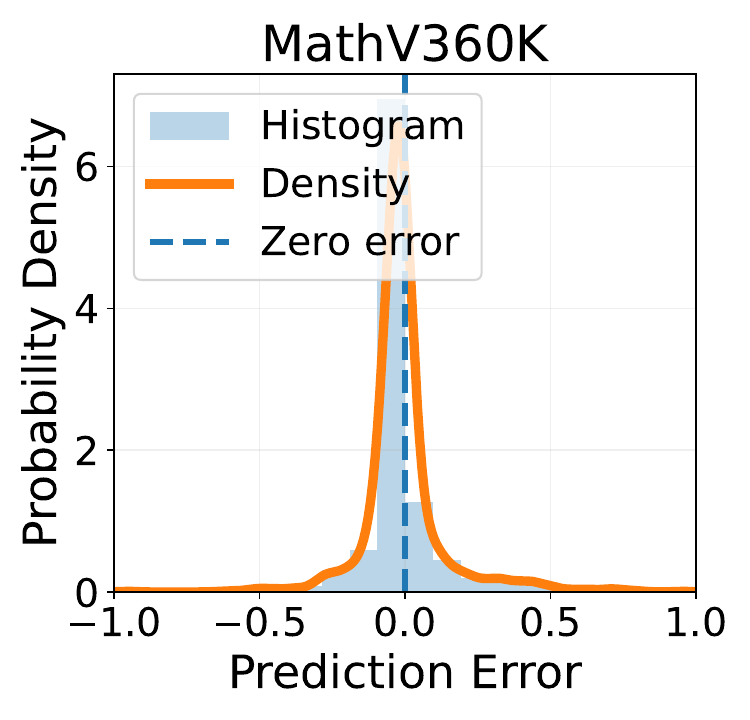}}
	\subfigure[] {\includegraphics[width=.24\linewidth]{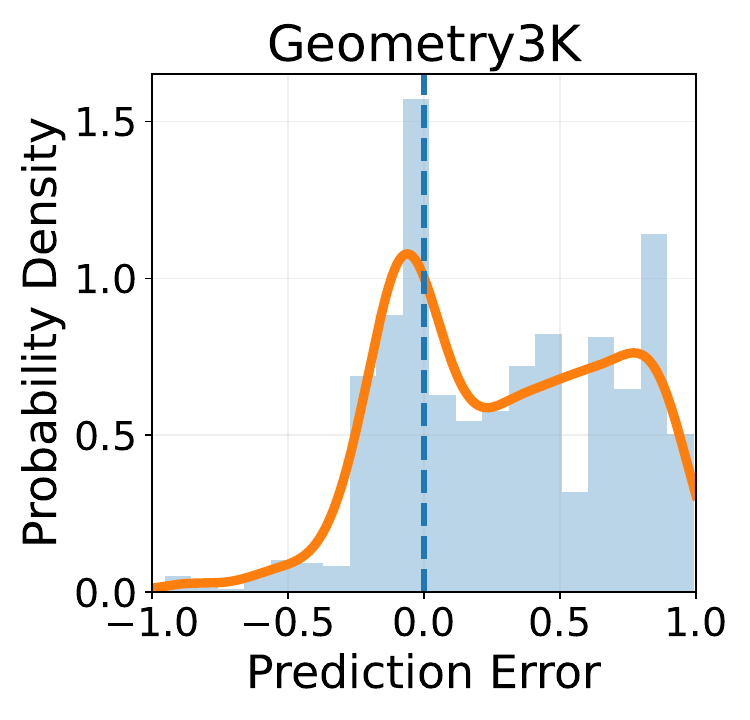}}
    \subfigure[] 
    {\includegraphics[width=.24\linewidth]{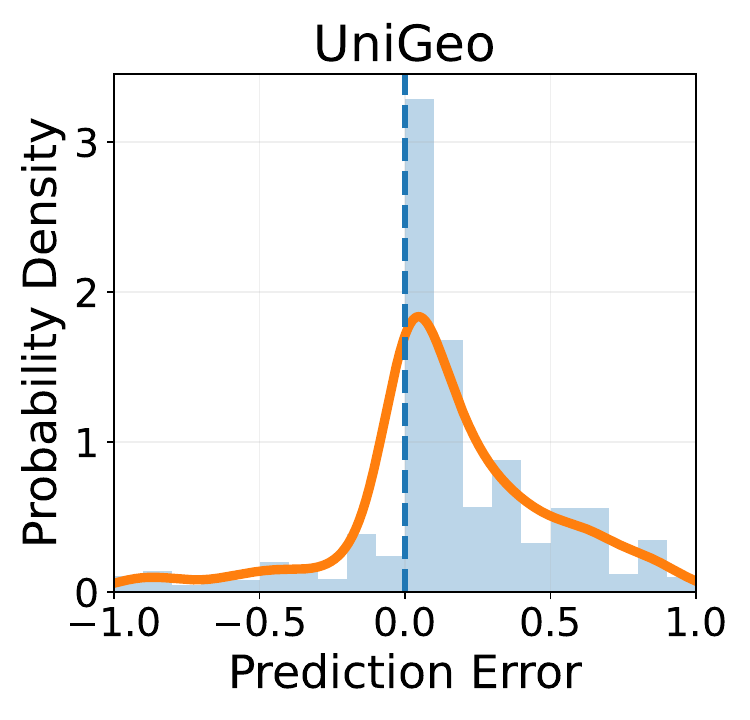}
         }
    \subfigure[] 
    {\includegraphics[width=.24\linewidth]{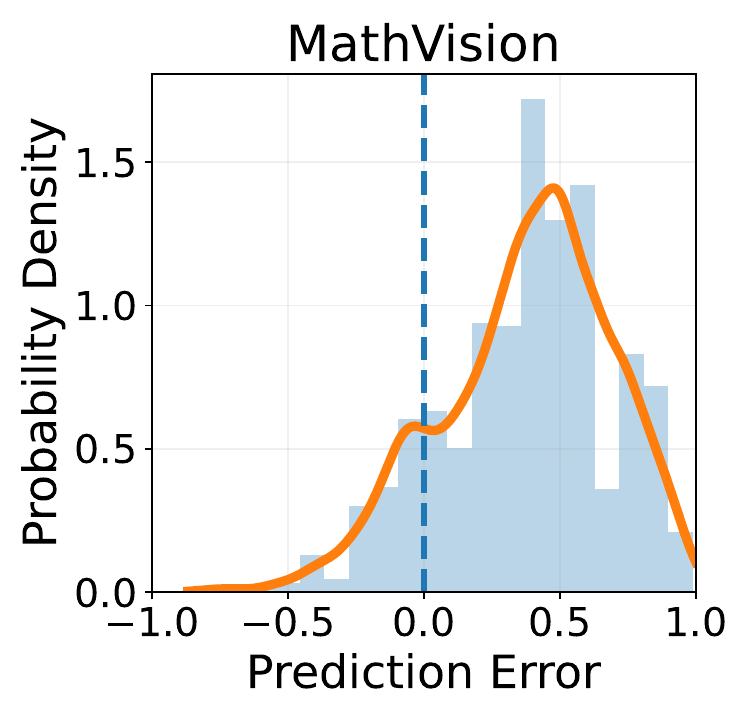}}
    \caption{\textbf{Illustration of PRM uncertainty and policy uncertainty.} \textbf{(a)-(b)} Prediction error distributions on MathV360K (ID) and Geometry3K (OOD), reflecting PRM uncertainty. \textbf{(c)-(d)} Prediction error distribution on UniGeo (ID) and MathVision (OOD), illustrating policy uncertainty. Prediction error is defined as the predicted reward minus the empirical success probability.}
    
    \label{fig:illustration}
\end{figure}

We next empirically validate the two sources of uncertainty under both in-distribution (ID) and out-of-distribution (OOD) settings. To examine PRM uncertainty, we use InternVL3-8B~\citep{zhu2025internvl3} as the base model and attach an MLP to its final layer as the PRM to predict a scalar process reward. We train the PRM on VisualPRM400K-v1.1~\citep{wang2026visualprm400k}, excluding the Geometry3K~\citep{lu2021inter} subset for subsequent OOD evaluation. For each remaining subset, 80\% of the data is used as the training dataset \(\mathcal{D}_{\text{train}}\) and the remaining 20\% as the validation dataset \(\mathcal{D}_{\text{val}}\). For the ID setting, we evaluate on \(\mathcal{D}_{\text{val}}\) of MathV360K ~\citep{shi2024math}, which can be regarded as independent and identical (i.i.d.) with \(\mathcal{D}_{\text{train}}\). For the OOD setting, we evaluate on the held-out Geometry3K subset. Therefore, policy uncertainty is removed in both settings. As shown in Figure~\ref{fig:illustration} (a) and (b), prediction errors on MathV360K are centered around zero but still exhibit variance, whereas the prediction errors on Geometry3K are substantially larger. This indicates that the PRM may remain unreliable on unseen problems, with the issue becoming more pronounced in OOD scenarios, reflecting PRM uncertainty.

We next examine policy uncertainty by using a weak rollout policy to
estimate the continuation success probability. We use
InternVL2.5-2B~\citep{chen2024expanding} as the rollout policy and
evaluate both an ID setting, $\mathcal{D}_\text{val}$ of UniGeo~\citep{chen2022unigeo}, and an OOD setting, MathVision~\citep{wang2024measuring}. For MathVision, we extract reasoning prefixes from
responses generated by the rollout policy. For each prefix in both
settings, we generate 16 continuations with the rollout policy and judge
their correctness against the ground-truth answer using
Qwen2.5-32B-Instruct~\citep{qwen2.5}. The resulting success rate is used as
the empirical success probability. As shown in Figure~\ref{fig:illustration} (c) and (d), the prediction error exhibits a positive bias under both ID and OOD settings, consistent with the observations reported in prior work~\citep{park2026know}, providing empirical evidence of policy uncertainty. The bias is more pronounced on MathVision, indicating that reward overestimation is more significant under the OOD setting.

\section{Methodology}

In this section, we formally present our BayesianPRM method. In Section~\ref{sec: prm_as_bayesian}, we formulate process reward prediction as Bayesian learning. In Section~\ref{sec:relia_conser}, we give a detailed demonstration of BayesianPRM. In Section~\ref{sec:continuous}, we extend the discussion to the continuous hypothesis space. In Section~\ref{sec:practical}, we present the practical implementation of BayesianPRM.

\subsection{Process Reward Modeling as Bayesian Learning}

\label{sec: prm_as_bayesian}

In this section, we formulate process reward modeling from a Bayesian learning perspective. Specifically, we treat the reward hypothesis as a random variable over a hypothesis space $\Theta$. Given a reasoning prefix $c_t$, each hypothesis $\theta \in \Theta$ predicts a process reward $r_\theta(c_t) = \mathbb{P}(Y_t=1 | c_t, \theta)$,
where $Y_t\in\{0,1\}$ indicates whether a continuation from $c_t$ eventually
reaches a correct answer. Let $\mathcal{D}_\text{train}=\{(c_t,K_t,N_t)\}_{t=1}^T$ denote the PRM training dataset. Let $p(\theta)$ be the prior belief over the hypothesis space $\Theta$. Given the observed supervision and the likelihood
$p(\mathcal{D}_\text{train}|\theta,c_t)$, the posterior belief $p(\theta|\mathcal{D}_\text{train},c_t)$ is obtained via Bayes' rule as in Equation~\ref{eq:bayes}. Given a reasoning prefix $c_t$, the posterior process reward prediction is obtained by marginalizing over plausible reward
hypotheses: $\tilde{r}(c_t)=\int_\Theta r_\theta(c_t)p(\theta|\mathcal{D}_\text{train},c_t)d\theta$.
Obtaining $\tilde{r}(c_t)$ requires posterior inference over the continuous hypothesis space $\Theta$, which is generally intractable. Therefore, we consider the discrete hypothesis space with a finite support set of hypotheses $\{\theta_m\}_{m=1}^M$. Specifically,
\begin{equation}
    \label{eq:discrete}
    p(\theta|\mathcal{D}_\text{train},c_t)=\sum_{m=1}^M \alpha_m(c_t)\delta(\theta-\theta_m), \quad \sum_{m=1}^M\alpha_m(c_t)=1, \quad \alpha_m(c_t)\geq0.
\end{equation}
Substituting Equation~\ref{eq:discrete} into $\tilde{r}(c_t)$, we have
$\tilde r(c_t)=\bm{\alpha}(c_t)^\top\bm r(c_t)$, where
$\bm{\alpha}(c_t)=[\alpha_1(c_t),\ldots,\alpha_M(c_t)]^\top$ and
$\bm r(c_t)=[r_1(c_t),\ldots,r_M(c_t)]^\top$.
Equation~\ref{eq:discrete} replaces the intractable integration with a
tractable weighted sum and naturally admits a reward ensemble
implementation~\citep{coste2024reward,duan2025efficient}, where each ensemble
member $r_m(\cdot)$ corresponds to a distinct reward hypothesis
$\theta_m\in\Theta$.

\subsection{Reliability- and Conservatism-aware Bayesian Belief}
\label{sec:relia_conser}

In this section, we present BayesianPRM, which incorporates PRM and policy uncertainty into process reward prediction by maintaining a prefix-conditioned Bayesian belief over ensemble reward hypotheses. BayesianPRM performs two successive belief updates: \textit{reliability-aware belief updating} accounts for PRM uncertainty by estimating hypothesis reliability from supervision data, while \textit{conservatism-aware belief calibration} addresses policy uncertainty by mitigating reward overestimation through conservative reweighting.

\textbf{Reliability-Aware Belief Updating.} Recall that in Section~\ref{sec: prm_as_bayesian}, the process reward is formulated as a weighted sum of reward hypotheses, $\tilde r(c_t)=\bm{\alpha}(c_t)^\top\bm{r}(c_t)$. The key challenge is to determine the belief weights $\bm{\alpha}(c_t)$.
A simple choice is a uniform prior,
$\bm{\alpha}^{\text{prior}}(c_t)=[\frac{1}{M},\ldots,\frac{1}{M}]^\top$.
However, this ignores PRM uncertainty, since different reward hypotheses may
have different reliability for the same prefix. We therefore introduce
\textit{reliability-aware belief updating}, which assigns greater belief mass
to more reliable reward hypotheses. For prefixes in the training dataset with Monte-Carlo labels, the posterior belief can be obtained
from Bayes' rule in the finite hypothesis space. However, such Monte-Carlo labels are unavailable for unseen prefixes at test time. We therefore adopt
amortized variational inference and train a belief network $w_\nu$ to predict
the prefix-conditioned posterior belief. Specifically, for a prefix $c_t$, the belief network $w_\nu$ outputs a reliability-aware belief vector $\bm{\alpha}^\text{rel}(c_t)=\left[
\alpha^{\text{rel}}_1(c_t),
\ldots,
\alpha^{\text{rel}}_M(c_t)
\right]^\top$, where $\sum_{i=m}^M\alpha^\text{rel}_m(c_t)=1$ and $\alpha^\text{rel}_m(c_t)\geq0$. Intuitively, we expect $w_\nu$ to predict the reliability of each reward ensemble member for the given prefix. Taking $\bm{\alpha}^\text{prior}(c_t)$ as the prior belief, we further define the reliability likelihood function to perform amortized variational inference. Specifically, we use a validation set $\mathcal{D}_\text{val}=\{(c_t,K_t,N_t)\}_{t=1}^{T^\prime}$, which is independent of the PRM training set $\mathcal{D}_\text{train}$. Since $r_m(c_t)$ represents the predicted success probability under $\theta_m$, $K_t$ follows a binomial distribution: $(K_t \mid N_t, c_t, \theta_m)
\sim
\operatorname{Binomial}\!\left(N_t, r_m(c_t)\right)$. Therefore, the reliability likelihood of the supervision data in $\mathcal{D}_\text{val}$ under $\theta_m$ is: $p(K_t \mid N_t, c_t, \theta_m)
=
\binom{N_t}{K_t}
r_m(c_t)^{K_t}
\left(1-r_m(c_t)\right)^{N_t-K_t}.$ This likelihood measures how well $r_m(c_t)$ explains the supervision data, and thus serves as a measure of its reliability for prefix $c_t$. Denote $\ell_{t,m}\triangleq\log p(K_t|N_t,c_t,\theta_m)=
\log \binom{N_t}{K_t}
+
K_t \log r_m(c_t)
+
(N_t-K_t)\log\left(1-r_m(c_t)\right)$ as the log-likelihood, we perform amortized variational inference, training the belief network on $\mathcal{D}_\text{val}$ by minimizing the following negative evidence lower bound (ELBO):
\begin{equation}
\label{eq:elbo}
    \begin{aligned}
    &\mathcal{L}_{\text{rel}}(\nu)=-\frac{1}{T^\prime}\sum_{t=1}^{T^\prime}\left[\sum_{m=1}^M\alpha_m^{\text{rel}}(c_t)\ell_{t,m}-\beta_1\cdot D_\mathrm{KL}(\bm{\alpha}^\text{rel}(c_t)\Vert\bm{\alpha}^\text{prior}(c_t))\right]\\
    &\doteq -\frac{1}{T^\prime}\sum_{t=1}^{T^\prime}\sum_{m=1}^M\alpha^\text{rel}_m(c_t)\left(K_t\log r_m(c_t)+(N_t-K_t)\log(1-r_m(c_t))-\beta_1\log(M\alpha_m^\text{rel}(c_t))\right),
\end{aligned}
\end{equation}
where $\bm{\alpha}^\text{prior}(c_t)=[\frac{1}{M},...,\frac{1}{M}]^\top$ is the uniform prior belief, and $\beta_1$ is the temperature coefficient. Note that the constant term $\log\binom{N_t}{K_t}$ is omitted in Equation~\ref{eq:elbo} since it does not affect optimization. To further justify the objective in Equation~\ref{eq:elbo}, we characterize its optimal solution and establish its connection to Bayesian inference in Proposition~\ref{prop:bayes_optimal}.
\begin{proposition} 
\label{prop:bayes_optimal}
The optimal solution of Equation~\ref{eq:elbo} recovers the following Bayesian posterior:
\begin{equation}
    \label{eq:optimal_rel}{\alpha_m^{\mathrm{rel}}(c_t)}=\frac{\alpha^\mathrm{prior}_m(c_t)\exp\left(\ell_{t,m}/\beta_1\right)}{\sum_{j=1}^M\alpha^\mathrm{prior}_j(c_t)\exp\left(\ell_{t,j}/\beta_1\right)}\propto{\alpha^\mathrm{prior}_m(c_t)}\cdot {p(K_t|N_t,c_t,\theta_m)^{1/\beta_1}},\quad m=1,2,...,M
\end{equation}
where $\alpha^\mathrm{prior}_m(c_t)$ is the $m$-th element of $\bm{\alpha}^\mathrm{prior}(c_t)$. Under a uniform belief $\bm{\alpha}^\mathrm{prior}(c_t)$, we have $\alpha_m^{\mathrm{rel}}(c_t)\propto p(K_t|N_t,c_t,\theta_m)^{1/\beta_1}$.  
\end{proposition}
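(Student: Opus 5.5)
The plan is to treat the objective in Equation~\ref{eq:elbo} as decoupled across prefixes. For each $c_t$ in $\mathcal{D}_\text{val}$, the summand depends on the network only through the belief vector $\bm{\alpha}^\text{rel}(c_t)$. We work in the nonparametric (infinite-capacity) regime, where $w_\nu$ can output an arbitrary point of the simplex for each prefix. Minimizing $\mathcal{L}_\text{rel}$ then reduces to $T^\prime$ independent problems of the form
\begin{equation*}
\max_{\bm{\alpha}\in\Delta^{M-1}} \; F(\bm{\alpha}) \triangleq \sum_{m=1}^M \alpha_m \ell_{t,m} - \beta_1 \sum_{m=1}^M \alpha_m \log\frac{\alpha_m}{\alpha^\text{prior}_m(c_t)},
\end{equation*}
with $\beta_1>0$ and $\ell_{t,m}$ fixed, since the ensemble members $r_m$ are frozen. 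We keep the prior general in this formulation and specialize to the uniform prior only at the end.

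We solve this by the Gibbs variational principle. Define the candidate $\alpha^\star_m \propto \alpha^\text{prior}_m(c_t)\exp(\ell_{t,m}/\beta_1)$, with normalizer $Z=\sum_j \alpha^\text{prior}_j(c_t)\exp(\ell_{t,j}/\beta_1)$. Then $\ell_{t,m} = \beta_1\log(\alpha^\star_m/\alpha^\text{prior}_m) + \beta_1\log Z$. Substituting this into $F$ gives the identity
\begin{equation*}
F(\bm{\alpha}) = \beta_1\log Z - \beta_1 D_\mathrm{KL}(\bm{\alpha}\Vert\bm{\alpha}^\star).
\end{equation*}
By Gibbs' inequality, the KL term is nonnegative and vanishes if and only if $\bm{\alpha}=\bm{\alpha}^\star$. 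Hence $\bm{\alpha}^\star$ is the unique maximizer, and it is exactly the expression in Equation~\ref{eq:optimal_rel}. This identity gives existence, uniqueness and the closed form at once, without any KKT or boundary case analysis. As a cross-check, one could instead take Lagrangian stationarity with a multiplier for $\sum_m\alpha_m=1$. This gives $\ell_{t,m}-\beta_1(\log(\alpha_m/\alpha^\text{prior}_m)+1)-\lambda=0$, and the same softmax follows. Strict concavity of the entropy term confirms that this stationary point is the global optimum.

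Next we rewrite the result in likelihood form. Since $\exp(\ell_{t,m}/\beta_1)=p(K_t\mid N_t,c_t,\theta_m)^{1/\beta_1}$, we obtain the tempered-Bayes form $\alpha^\text{rel}_m\propto\alpha^\text{prior}_m\,p(K_t\mid N_t,c_t,\theta_m)^{1/\beta_1}$. With $\beta_1=1$ this is exactly Bayes' rule (Equation~\ref{eq:bayes}) on the finite support. Under the uniform prior $\alpha^\text{prior}_m=1/M$, the factor $1/M$ cancels in the normalization, which gives the last claim.

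We also need to check that the second line of Equation~\ref{eq:elbo} matches this objective. Under the uniform prior, $D_\mathrm{KL}(\bm{\alpha}\Vert\bm{\alpha}^\text{prior})=\sum_m\alpha_m\log(M\alpha_m)$. Dropping $\log\binom{N_t}{K_t}$ shifts every $\ell_{t,m}$ by the same constant, and since $\sum_m\alpha_m=1$ this changes $F$ only by an additive constant. The maximizer is therefore unchanged, and the constant cancels in the softmax anyway. The main obstacle is interpretive rather than technical: ``optimal solution'' has to be read pointwise over prefixes. This requires the belief network to be expressive enough to realize the per-prefix maximizer. We state this explicitly, and also require $r_m(c_t)\in(0,1)$ so that each $\ell_{t,m}$ is finite.
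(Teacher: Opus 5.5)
Your proof is correct. It shares the paper's first step, decoupling $\mathcal{L}_{\text{rel}}$ across prefixes and reducing to a KL-regularized linear maximization over $\Delta^{M-1}$, but solves that subproblem by a different route.

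The paper forms a Lagrangian with a multiplier for $\sum_m\alpha_m=1$ and multipliers $\mu_m\ge 0$ for nonnegativity. It asserts without further justification that the KL term forces an interior optimum, so $\mu_m=0$ by complementary slackness. It then solves the stationarity condition, normalizes, and invokes strict concavity to upgrade the stationary point to the unique global maximizer.

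Your Gibbs-variational identity $F(\bm{\alpha})=\beta_1\log Z-\beta_1 D_{\mathrm{KL}}(\bm{\alpha}\Vert\bm{\alpha}^\star)$ gives existence, uniqueness, global optimality and the closed form in a single step. It needs neither the unproved interiority claim nor a separate concavity argument. As a bonus, it identifies the optimal value as $\beta_1\log Z$. At $\beta_1=1$ this is exactly the log-evidence $\log\sum_m\alpha^{\mathrm{prior}}_m p(K_t\mid N_t,c_t,\theta_m)$, with the gap equal to the KL divergence to the posterior, which makes the ``ELBO'' label literal.

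The paper's KKT template is more generic: it would carry over to other regularizers or additional constraints. Your argument is specific to KL regularization but tighter and fully rigorous.

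You also make explicit two points the paper leaves implicit. First, ``optimal solution'' must be read pointwise over prefixes, which requires the amortized network $w_\nu$ to be expressive enough to realize each per-prefix maximizer. Second, dropping $\log\binom{N_t}{K_t}$ only shifts $F$ by a constant because $\sum_m\alpha_m=1$.
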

\textbf{Remark.} Proposition~\ref{prop:bayes_optimal} provides a Bayesian interpretation of Equation~\ref{eq:elbo}. Specifically, optimizing Equation~\ref{eq:elbo} corresponds to updating the prior belief $\bm{\alpha}^\text{prior}(c_t)$ according to the reliability likelihood of the supervision data in $\mathcal{D}_\text{val}$, thus addressing PRM uncertainty. $\beta_1$ controls the strength of this belief update: when $\beta_1=1$, the solution recovers the standard Bayesian posterior, while $\beta_1>1$ yields a flatter tempered posterior and $\beta_1<1$ yields a sharper one. This update directly follows Bayes' rule in Equation~\ref{eq:bayes}. The key difference is that, the belief network amortizes this Bayesian update, enabling the reliability-aware belief to generalize to unseen prefixes at test time.

\textbf{Conservatism-Aware Belief Calibration.} Although the reliability-aware belief addresses PRM uncertainty, reward overestimation may still occur since policy uncertainty is not accounted for. Therefore, we introduce a second stage of belief updating, termed \textit{conservatism-aware belief calibration}. In this stage, we take the learned reliability-aware belief as the prior belief and further calibrate it by shifting belief mass toward reward hypotheses with lower reward predictions.

Specifically, we treat the reliability-aware belief $\bm{\alpha}^{\text{rel}}(c_t)$ as the prior belief,  and define the hybrid Bayesian belief as
\begin{equation}
\label{eq:hybrid_belief_objective}
\bm{\alpha}^{\text{hyb}}(c_t)
=
\arg\min_{\bm{\alpha}\in\Delta^{M-1}}
\left\{
\bm{\alpha}^{\top}\bm r(c_t)
+
\beta_2\cdot
D_{\mathrm{KL}}
\left(
\bm{\alpha}
\Vert
\bm{\alpha}^{\text{rel}}(c_t)
\right)
\right\},
\end{equation}
where $\Delta^{M-1}=\{\bm{\alpha}\in\mathbb{R}_{\geq 0}^{M}:\sum_{m=1}^{M}\alpha_m=1\}$ denotes the probability simplex, and $\beta_2>0$ is a temperature coefficient. Solving Equation~\ref{eq:hybrid_belief_objective} yields the following closed-form solution:
\begin{equation}
\label{eq:hybrid_belief}
{\alpha_m^{\text{hyb}}(c_t)}
=
\frac{
\alpha_m^{\text{rel}}(c_t)
\exp\left(
-
r_m(c_t)/
\beta_2
\right)
}{
\sum_{j=1}^{M}
\alpha_j^{\text{rel}}(c_t)
\exp\left(
-
r_j(c_t)/
\beta_2
\right)
}\propto{\alpha_m^{\text{rel}}(c_t)}
\cdot{\exp\left(
-
r_m(c_t)/
\beta_2
\right)},\quad m=1,2,...,M.
\end{equation}
\textbf{Remark.} Equation~\ref{eq:hybrid_belief} admits a Bayesian interpretation: $\bm{\alpha}^\text{hyb}(c_t)$ is the Bayesian posterior obtained by taking $\bm{\alpha}^\text{rel}(c_t)$ as the prior and $\exp(-\bm{r}(c_t)/\beta_2)$ as the likelihood. This formulation is closely related to the control-as-inference framework~\citep{levine2018reinforcement}, where an exponentially weighted reward is used as a likelihood. The key difference is that control-as-inference typically assigns larger likelihood to higher-reward outcomes, whereas Equation~\ref{eq:hybrid_belief} introduces conservatism by assigning larger likelihood to lower-reward predictions, thereby mitigating reward overestimation. Note that, unlike reliability-aware belief updating, this stage does not require training a belief network. At test time, the ensemble reward predictions $\bm{r}(c_t)$ are available. Therefore, $\bm{\alpha}^\text{hyb}(c_t)$ can be directly obtained by applying exponential reweighting in Equation~\ref{eq:hybrid_belief} to $\bm{\alpha}^\text{rel}(c_t)$.

Substituting Equation~\ref{eq:optimal_rel} into Equation~\ref{eq:hybrid_belief}, we obtain:
\begin{equation}
    \label{eq:final_belief}
    \underbrace{\alpha_m^{\text{hyb}}(c_t)}_{\text{posterior}}\propto\underbrace{\alpha_m^{\text{prior}}(c_t)}_{\text{prior}}\cdot\underbrace{\exp\left(\ell_{t,m}/\beta_1-r_m(c_t)/
\beta_2\right)}_{\text{likelihood}},\quad m=1,2,...,M.
\end{equation}
In Equation~\ref{eq:final_belief}, the likelihood consists of two components: the first term, $\exp(\ell_{t,m}/\beta_1)$, accounts for the reliability of each reward hypothesis, while the second term, $\exp(-r_m(c_t)/\beta_2)$, introduces conservatism by favoring hypotheses with lower reward predictions.  $\beta_1$ and $\beta_2$ control the strengths of the reliability-aware and conservatism-aware belief updates, respectively. Therefore, $\bm{\alpha}^\text{hyb}(c_t)$ represents a reliability- and conservatism-aware Bayesian belief. The final process reward is computed as: $\tilde{r}(c_t)
=
\left[
\bm{\alpha}^{\text{hyb}}(c_t)
\right]^{\top}
\bm r(c_t).$ By aggregating the ensemble predictions under $\bm{\alpha}^\text{hyb}(c_t)$, $\tilde{r}(c_t)$ jointly accounts for reliability and conservatism. Specifically, it assigns larger weights to reward hypotheses that better explain the supervision data while favoring more conservative hypotheses with lower reward predictions, thereby addressing both PRM uncertainty and policy uncertainty.

\subsection{Connection with Continuous Hypothesis Space} 
\label{sec:continuous}

Thus far, we have considered the discrete hypothesis space represented by a finite reward ensemble for computational tractability. In this part, we further show that this finite-ensemble formulation can be interpreted as an implicit Bayesian update over an underlying continuous reward hypothesis space $\Theta$. Specifically, let $p(\theta)$ denote the prior belief over $\Theta$, and we assume the ensemble hypotheses $\theta_1,...,\theta_M$ are sampled independently and identically from $p(\theta)$, that is, $\theta_1,...,\theta_M\overset{\mathrm{i.i.d.}}{\sim} p(\theta)$. Then $\tilde{r}(c_t)=\mathbb{E}_{\theta_{1:M}\overset{\mathrm{i.i.d.}}\sim p(\theta)}\sum_{m=1}^M\left[\alpha^\text{hyb}_{\theta_m}(c_t)\cdot r_{\theta_m}(c_t)\right]$. For consistency with the preceding notations, we abbreviate $\alpha^\text{hyb}_{\theta_m}(c_t)$ as $\alpha^\text{hyb}_{m}(c_t)$ and $r_{\theta_m}(c_t)$ as $r_m(c_t)$. Our goal is to construct a posterior belief $\tilde{p}(\theta|c_t)$ such that $\tilde{r}(c_t)=\mathbb{E}_{\theta_{1:M}\overset{\mathrm{i.i.d.}}\sim p(\theta)}\sum_{m=1}^M\left[\alpha^\text{hyb}_{m}(c_t)\cdot r_{m}(c_t)\right]=\mathbb{E}_{\theta\sim\tilde{p}(\theta|c_t)}\left[r_\theta(c_t)\right]$. In other words, we aim to show that the expected weighted sum over prior hypotheses $p(\theta)$ can be equivalently represented as an expectation under a continuous posterior belief $\tilde{p}(\theta|c_t)$. According to Bayes' rule, $\tilde{p}(\theta|c_t)$ could be obtained by reweighting $p(\theta)$ with a likelihood ratio $\xi(\theta;c_t)$, that is, $\tilde{p}(\theta|c_t)=p(\theta)\xi(\theta;c_t)$. Denote $s(\theta;c_t)\triangleq\ell_{t,\theta}/\beta_1-r_\theta(c_t)/\beta_2$, where $\ell_{t,\theta}\triangleq\log p(K_t|N_t,c_t,\theta)=\log \binom{N_t}{K_t}
+
K_t \log r_\theta(c_t)
+
(N_t-K_t)\log\left(1-r_\theta(c_t)\right)$. Then we characterize $\xi(\theta;c_t)$ as follows.

\begin{proposition}[Likelihood Ratio]
\label{prop:likelihood}Let $\mathcal{F}$ denote the cumulative distribution function of $s(\theta;c_t)$ with $\theta\sim p(\theta)$. Then the likelihood ratio $\xi(\theta;c_t)$ is given by:
\begin{equation}
    \label{eq:xi}\xi(\theta;c_t)=\sum_{m=1}^M\bar{\alpha}_{m}(\theta;c_t)\frac{M!}{(m-1)!(M-m)!}\mathcal{U}_m(\mathcal{F}(s(\theta;c_t))),
\end{equation}
where $\mathcal{U}_m(\mathcal{F}(s(\theta;c_t)))=[\mathcal{F}(s(\theta;c_t))]^{m-1}[1-\mathcal{F}(s(\theta;c_t))]^{M-m}$, and $\{\bar{\alpha}_m(\theta;c_t)\}_{m=1}^M$ denotes the weighting coefficients defined in Appendix~\ref{app:proof_xi}.

\end{proposition}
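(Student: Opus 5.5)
Our strategy is to write $\tilde r(c_t)$ as an expectation over a single hypothesis $\theta\sim p(\theta)$ by exploiting exchangeability of the i.i.d.\ ensemble and sorting the members by their score. Write $s_m=s(\theta_m;c_t)$. By Equation~\ref{eq:final_belief} with the uniform prior, $\alpha^{\mathrm{hyb}}_m(c_t)=\exp(s_m)/\sum_{j}\exp(s_j)$, so $\alpha^{\mathrm{hyb}}_m$ depends only on $s_m$ and on the multiset of the other scores. Since $\theta_{1:M}$ are exchangeable, linearity of expectation gives
\begin{equation*}
\tilde r(c_t)=M\,\mathbb{E}_{\theta_{1:M}}\bigl[\alpha^{\mathrm{hyb}}_1(c_t)\,r_{\theta_1}(c_t)\bigr]
=\int_\Theta p(\theta)\,r_\theta(c_t)\,M\,\mathbb{E}_{\theta_{2:M}}\bigl[\alpha^{\mathrm{hyb}}_1(c_t)\mid\theta_1=\theta\bigr]\,d\theta .
\end{equation*}
This already identifies $\xi(\theta;c_t)=M\,\mathbb{E}[\alpha^{\mathrm{hyb}}_1\mid\theta_1=\theta]$, and $\int p\,\xi=\sum_m\mathbb{E}[\alpha^{\mathrm{hyb}}_m]=1$, so $\tilde p=p\,\xi$ is a valid density. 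The remaining task is to bring this into the order-statistic form of Equation~\ref{eq:xi}.

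To do so, we condition on the rank of $\theta_1$ among the $M$ scores. Let $u=\mathcal F(s(\theta;c_t))$. Conditional on $\theta_1=\theta$, the event that exactly $m-1$ of the other $M-1$ scores fall below $s(\theta;c_t)$ has probability $\binom{M-1}{m-1}u^{m-1}(1-u)^{M-m}$, assuming $\mathcal F$ is continuous so that ties have probability zero. Multiplying by $M$ gives $M\binom{M-1}{m-1}=\frac{M!}{(m-1)!(M-m)!}$, which is exactly the coefficient in front of $\mathcal U_m$. We then define
\begin{equation*}
\bar\alpha_m(\theta;c_t)\triangleq\mathbb{E}_{\theta_{2:M}}\bigl[\alpha^{\mathrm{hyb}}_1(c_t)\,\big|\,\theta_1=\theta,\ \mathrm{rank}(s_1)=m\bigr],
\end{equation*}
which is the expected softmax weight of the hypothesis when it occupies the $m$-th order statistic. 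The law of total expectation over the $M$ rank events then yields Equation~\ref{eq:xi}.

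Two checks remain. First, conditional on the rank event, the lower scores are i.i.d.\ from $\mathcal F$ truncated to $(-\infty,s]$ and the upper scores are i.i.d.\ truncated to $(s,\infty)$, so $\bar\alpha_m$ is a well-defined function of $(\theta,c_t)$; this gives an explicit formula to record in Appendix~\ref{app:proof_xi}. Second, the step $\mathbb{E}[\sum_m\alpha_m r_m]=M\,\mathbb{E}[\alpha_1 r_1]$ needs the belief weights to be symmetric functions of the ensemble, which holds for the closed form in Equation~\ref{eq:final_belief}.

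The main obstacle is measure-theoretic care in the rank conditioning. One must justify that the conditional law of the remaining scores splits into independent truncated draws, and handle atoms of $\mathcal F$, for instance by assuming continuity or by a random tie-breaking convention. The plan is to state continuity of $\mathcal F$ as an assumption, which makes $\mathcal F(s(\theta_j))$ uniform on $[0,1]$, and to carry out the conditioning in that uniform coordinate.
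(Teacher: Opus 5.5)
Your proof is correct. It reaches the paper's formula by a reorganized route, so the comparison comes down to bookkeeping.

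\textbf{The paper's route.} It sorts the ensemble by $s$ first. It rewrites $\mathbb{E}[\sum_m\alpha^{\mathrm{hyb}}_m r_m]$ as $\sum_m\mathbb{E}[\alpha^{\mathrm{hyb}}_{(m)}r_{\theta_{(m)}}]$ and conditions on $\theta_{(m)}$ via the tower property. It then defines $\bar\alpha_m(\theta;c_t)=\mathbb{E}[\alpha^{\mathrm{hyb}}_{(m)}(c_t)\mid\theta_{(m)}=\theta]$ and substitutes the marginal density $p_{(m)}(\theta)=\frac{M!}{(m-1)!(M-m)!}\mathcal U_m(\mathcal F(s(\theta;c_t)))\,p(\theta)$ of the hypothesis at rank $m$. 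Normalization is checked separately at the end.

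\textbf{Your route.} You use exchangeability to collapse the sum onto a single member, which gives $\xi(\theta;c_t)=M\,\mathbb{E}[\alpha^{\mathrm{hyb}}_1(c_t)\mid\theta_1=\theta]$. Only then do you split by the rank of $s_1$.

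\textbf{Why they agree.} The paper's $p_{(m)}(\theta)$ is exactly $M\,p(\theta)$ times your rank probability $\binom{M-1}{m-1}u^{m-1}(1-u)^{M-m}$. Your $\bar\alpha_m$ coincides with the paper's because the event $\{\theta_{(m)}\in d\theta\}$ is the disjoint union of the $M$ equally likely events $\{\theta_i\in d\theta,\ \mathrm{rank}(s_i)=m\}$. Each of these induces the same conditional law of the weight.

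\textbf{What the paper's ordering buys.} It never needs the weights to be permutation-equivariant: sorting plus the tower property works for any measurable assignment of weights to the ensemble. You need the softmax symmetry, which does hold here. You could drop it by writing $\xi=\sum_i\mathbb{E}[\alpha^{\mathrm{hyb}}_i\mid\theta_i=\theta]$ instead of $M\,\mathbb{E}[\alpha^{\mathrm{hyb}}_1\mid\theta_1=\theta]$.

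\textbf{What your route buys.}
\begin{itemize}
\item A compact, rank-free expression for $\xi$ and a one-line normalization check.
\item The observation that the order-statistic form is just one re-expression of a single conditional expectation, not a uniquely forced form.
\item An explicit statement of the no-ties assumption (continuity of $\mathcal F$), which the paper's order-statistic density uses silently.
\item A concrete description of $\bar\alpha_m$ through truncated draws below and above $s(\theta;c_t)$.
\end{itemize}
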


\textbf{Remark.} For each $m\in\{1,...,M\}$, $\mathcal{U}_m(\mathcal{F}(s(\theta;c_t))$ is a unimodal continuous function. This function attains its maximum value at the $\frac{m-1}{M-1}$-quantile of $s(\theta;c_t)$, that is, $\mathcal{F}(s(\theta;c_t))=\frac{m-1}{M-1}$. Therefore, each term in Equation~\ref{eq:xi} corresponds to a continuous component concentrated around a particular quantile of $s(\theta;c_t)$.
When the prior $p(\theta)$ is uniform, the posterior satisfies $\tilde{p}(\theta|c_t)\propto\xi(\theta;c_t)$, and is a weighted mixture of these continuous components. Compared with the discrete belief in Equation~\ref{eq:discrete}, $\tilde{p}(\theta|c_t)$ distributes the belief weights over the continuous hypothesis space $\Theta$. This shows that our finite-ensemble formulation can be viewed as an implicit Bayesian update that reshapes the prior belief into a continuous posterior through likelihood reweighting. 

\subsection{Practical Implementation}
\label{sec:practical}
In this section, we present the practical implementation of BayesianPRM, including the model architecture and the training and inference procedures.

\textbf{Model Architecture.} BayesianPRM consists of a reward ensemble and a belief network built on a shared LLM backbone. Given a prefix $c_t$, we insert a marker token \texttt{<prm>}, and the LLM backbone produces a final-layer hidden representation $z_t$ at \texttt{<prm>}. We attach $M$ reward heads $\{r_m\}_{m=1}^M$ to $z_t$. To prevent ensemble collapse and maintain diversity among the reward predictions, following~\cite{lee2026inference}, we pair the reward heads with randomly initialized and fixed prior networks $\{f_m\}_{m=1}^M$~\citep{osband2018randomized,osband2023epistemic}. The prediction of the $m$-th process reward is given by $r_m(c_t)=\sigma(g_m(z_t)+\lambda_\text{prior}f_m(z_t))$, where $g_m(z_t)$ is the reward logit of $r_m$, $\lambda_\text{prior}$ is a hyperparameter and $\sigma(\cdot)$ is the sigmoid function. For the belief network $w_\nu$, it takes $x_t=[z_t;\bm{r}(c_t)]$ as the input and outputs $M$ logits, the reliability-aware belief is computed as $\bm{\alpha}^\text{rel}(c_t)=\mathrm{Softmax}(w_\nu(x_t))$.

\textbf{Training Procedure.} BayesianPRM includes a two-stage training procedure. At the first stage, we train the reward ensemble on the training set $\mathcal{D}_{\text{train}}$. Specifically, the LLM backbone and each reward head are optimized with the binary cross-entropy loss in Equation~\ref{eq:cross_entropy}. At the second stage, we freeze the LLM backbone and all reward heads and train only the belief network on the validation set $\mathcal{D}_{\text{val}}$ by optimizing the ELBO in Equation~\ref{eq:elbo}. We use $\mathcal{D}_{\text{val}}$ instead of $\mathcal{D}_{\text{train}}$ for belief network training to prevent it from overfitting to the training data used to optimize the reward ensemble.

\textbf{Inference Procedure.} During inference, given a prefix $c_t$, the reward ensemble first produces the reward predictions $\bm r(c_t)$, while the belief network outputs the reliability-aware belief $\bm{\alpha}^{\text{rel}}(c_t)$. We then perform conservatism-aware belief calibration to obtain $\bm{\alpha}^\text{hyb}(c_t)$ by Equation~\ref{eq:hybrid_belief}. The final process reward is computed as $\tilde{r}(c_t)
=
\left[
\bm{\alpha}^{\text{hyb}}(c_t)
\right]^{\top}
\bm r(c_t).$ We present more implementation details and the pseudo-code of BayesianPRM in Appendix~\ref{sec:app_details}.

\begin{table*}[t]
\centering
\caption{
\textbf{PRM accuracy evaluation results.} We evaluate Brier Score, Positive Brier Score, ECE, and AverageCE on four benchmarks: MathVision, the Vision Only subset of MathVerse (denoted as MathVerse-VO), MathVista, and MMStar. {\color{improvegray}$\downarrow$} indicates that lower values are better. $\pm$ denotes the standard deviation across three random training seeds.
}
\resizebox{\textwidth}{!}{
\begin{tabular}{lccccc}
\toprule

& \textbf{MathVision}
& \textbf{MathVerse-VO}
& \textbf{MathVista}
& \textbf{MMStar}
& \textbf{Average} \\
\midrule

\multicolumn{6}{c}{\textit{\textbf{Brier Score}} {\color{improvegray}$\downarrow$}} \\
\midrule

Standard PRM
& 0.2025{\small\textcolor{improvegray}{$\pm 0.0194$}}
& 0.2274{\small\textcolor{improvegray}{$\pm 0.0143$}}
& 0.1704{\small\textcolor{improvegray}{$\pm 0.0082$}}
& 0.2357{\small\textcolor{improvegray}{$\pm 0.0236$}}
& 0.2090\\

BetaPRM
& 0.2377{\small\textcolor{improvegray}{$\pm 0.0221$}}
& 0.2489{\small\textcolor{improvegray}{$\pm 0.0163$}}
& 0.1712{\small\textcolor{improvegray}{$\pm 0.0148$}}
& 0.2427{\small\textcolor{improvegray}{$\pm 0.0121$}}
& 0.2251 \\

DreamPRM
& 0.2461{\small\textcolor{improvegray}{$\pm 0.0292$}}
& 0.2498{\small\textcolor{improvegray}{$\pm 0.0180$}}
& 0.1904{\small\textcolor{improvegray}{$\pm 0.0064$}}
& 0.2342{\small\textcolor{improvegray}{$\pm 0.0208$}}
& 0.2301\\

\rowcolor{lightpurple}
BayesianPRM
& \textbf{0.1411}{\small\textcolor{improvegray}{$\pm 0.0201$}}
& \textbf{0.1848}{\small\textcolor{improvegray}{$\pm 0.0112$}}
& \textbf{0.1579}{\small\textcolor{improvegray}{$\pm 0.0107$}}
& \textbf{0.2085}{\small\textcolor{improvegray}{$\pm 0.0153$}}
& \textbf{0.1731}\\

\midrule
\multicolumn{6}{c}{\textit{\textbf{Positive Brier Score}}
{\color{improvegray}$\downarrow$}} \\
\midrule

Standard PRM
& 0.1931{\small\textcolor{improvegray}{$\pm 0.0314$}}
& 0.2147{\small\textcolor{improvegray}{$\pm 0.0203$}}
& 0.1403{\small\textcolor{improvegray}{$\pm 0.0149$}}
& 0.2300{\small\textcolor{improvegray}{$\pm 0.0182$}}
& 0.1945 \\

BetaPRM
& 0.2308{\small\textcolor{improvegray}{$\pm 0.0226$}}
& 0.2413{\small\textcolor{improvegray}{$\pm 0.0209$}}
& 0.1496{\small\textcolor{improvegray}{$\pm 0.0157$}}
& 0.2371{\small\textcolor{improvegray}{$\pm 0.0261$}}
& 0.2147 \\

DreamPRM
& 0.2428{\small\textcolor{improvegray}{$\pm 0.0251$}}
& 0.2445{\small\textcolor{improvegray}{$\pm 0.0167$}}
& 0.1808{\small\textcolor{improvegray}{$\pm 0.0189$}}
& 0.2276{\small\textcolor{improvegray}{$\pm 0.0203$}}
& 0.2239\\

\rowcolor{lightpurple}
BayesianPRM
& \textbf{0.1207}{\small\textcolor{improvegray}{$\pm 0.0085$}}
& \textbf{0.1602}{\small\textcolor{improvegray}{$\pm 0.0113$}}
& \textbf{0.1130}{\small\textcolor{improvegray}{$\pm 0.0076$}}
& \textbf{0.1965}{\small\textcolor{improvegray}{$\pm 0.0128$}}
& \textbf{0.1476}\\






\midrule
\multicolumn{6}{c}{\textit{\textbf{ECE}}
{\color{improvegray}$\downarrow$}} \\
\midrule

Standard PRM
& 0.3213{\small\textcolor{improvegray}{$\pm 0.0263$}}
& 0.2868{\small\textcolor{improvegray}{$\pm 0.0196$}}
& 0.1481{\small\textcolor{improvegray}{$\pm 0.0075$}}
& 0.2704{\small\textcolor{improvegray}{$\pm 0.0351$}}
& 0.2567\\

BetaPRM
& 0.3777{\small\textcolor{improvegray}{$\pm 0.0411$}}
& 0.3241{\small\textcolor{improvegray}{$\pm 0.0259$}}
& 0.1654{\small\textcolor{improvegray}{$\pm 0.0107$}}
& 0.2760{\small\textcolor{improvegray}{$\pm 0.0212$}}
& 0.2858 \\

DreamPRM
& 0.3585{\small\textcolor{improvegray}{$\pm 0.0399$}}
& 0.3139{\small\textcolor{improvegray}{$\pm 0.0240$}}
& 0.1902{\small\textcolor{improvegray}{$\pm 0.0137$}}
& 0.2827{\small\textcolor{improvegray}{$\pm 0.0205$}}
& 0.2863\\

\rowcolor{lightpurple}
BayesianPRM
& \textbf{0.2102}{\small\textcolor{improvegray}{$\pm 0.0116$}}
& \textbf{0.2016}{\small\textcolor{improvegray}{$\pm 0.0154$}}
& \textbf{0.1108}{\small\textcolor{improvegray}{$\pm 0.0072$}}
& \textbf{0.2200}{\small\textcolor{improvegray}{$\pm 0.0108$}}
& \textbf{0.1857}\\
\midrule
\multicolumn{6}{c}{\textit{\textbf{AverageCE}}
{\color{improvegray}$\downarrow$}} \\
\midrule

Standard PRM
& 0.2849{\small\textcolor{improvegray}{$\pm 0.0266$}}
& 0.2140{\small\textcolor{improvegray}{$\pm 0.0160$}}
& 0.1139{\small\textcolor{improvegray}{$\pm 0.0100$}}
& 0.2159{\small\textcolor{improvegray}{$\pm 0.0159$}}
& 0.2072\\

BetaPRM
& 0.2823{\small\textcolor{improvegray}{$\pm 0.0126$}}
& 0.2401{\small\textcolor{improvegray}{$\pm 0.0228$}}
& 0.1372{\small\textcolor{improvegray}{$\pm 0.0060$}}
& 0.1901{\small\textcolor{improvegray}{$\pm 0.0084$}}
& 0.2124 \\

DreamPRM
& 0.2973{\small\textcolor{improvegray}{$\pm 0.0204$}}
& 0.2305{\small\textcolor{improvegray}{$\pm 0.0172$}}
& 0.1622{\small\textcolor{improvegray}{$\pm 0.0120$}}
& 0.2384{\small\textcolor{improvegray}{$\pm 0.0097$}}
& 0.2321\\

\rowcolor{lightpurple}
BayesianPRM
& \textbf{0.2301}{\small\textcolor{improvegray}{$\pm 0.0193$}}
& \textbf{0.1733}{\small\textcolor{improvegray}{$\pm 0.0092$}}
& \textbf{0.0946}{\small\textcolor{improvegray}{$\pm 0.0013$}}
& \textbf{0.1720}{\small\textcolor{improvegray}{$\pm 0.0029$}}
& \textbf{0.1675}\\
\bottomrule
\end{tabular}
}
\label{tab:calibration}
\end{table*}

\section{Experiments}
\label{sec:experiments}
In this section, we empirically evaluate BayesianPRM from two perspectives. First, we investigate whether BayesianPRM improves the accuracy of process reward prediction. Second, we examine whether BayesianPRM can enhance test-time scaling performance.

\subsection{Process Reward Accuracy Evaluation}
\label{sec:accuracy}
\textbf{Datasets.} We mainly evaluate our method on multimodal reasoning benchmarks. The dataset we use during training is VisualPRM400K-v1.1~\citep{wang2026visualprm400k}. For BayesianPRM, we randomly sample 80\% of the data as $\mathcal{D}_\text{train}$ and use the remaining 20\% as $\mathcal{D}_\text{val}$ for belief network training. For the other baselines, we use the full dataset for training. For evaluation benchmarks, we use MathVision~\citep{wang2024math}, the Vision Only subset of MathVerse~\citep{zhang2024mathverse} (denoted as MathVerse-VO), MathVista~\citep{lu2024mathvista}, and MMStar~\citep{chen2024we}. For each image-question pair, we first use InternVL3-8B to generate a reasoning response, from which intermediate reasoning prefixes are extracted. Starting from each prefix $c_t$, we perform 16 Monte-Carlo rollouts using InternVL3-8B. Each rollout is then evaluated by Qwen2.5-32B-Instruct as the judge model to determine its correctness. The proportion of correct rollouts is treated as the empirical success probability $\hat{p}_t$. The $(c_t,\hat{p}_t)$ pairs constitute $\mathcal{D}_\text{test}$ for evaluation.

\textbf{Evaluation Metrics.} Following~\cite{park2026know}, we assess the accuracy of PRM predictions using the following calibration metrics: Brier Score~\citep{glenn1950verification}, Positive Brier Score~\citep{park2026know}, Expected Calibration Error (ECE)~\citep{naeini2015obtaining} and Average Calibration Error (AverageCE)~\citep{neumann2018relaxed} between the empirical success probability and the PRM predictions. The computation of these metrics is presented in Appendix~\ref{sec:app_metrics}.

\textbf{Baselines.} We consider the following baselines for comparison with BayesianPRM: (1) Standard PRM, which is optimized using the standard cross-entropy loss; (2) BetaPRM~\citep{li2026process}, which models the process reward as a Beta distribution; and (3) DreamPRM~\citep{cao2026dreamprm}, which reweights different domains of the dataset for multimodal reasoning. All PRMs are built on InternVL3-8B. Since QR-PRM~\citep{park2026know} requires an additional calibration set, we provide a separate comparison with it in Appendix~\ref{sec:app_calibration_sets}.

\textbf{Results.} As shown in Table~\ref{tab:calibration}, BayesianPRM consistently achieves the lowest calibration error across all four benchmarks and all four metrics. Specifically, BayesianPRM reduces the average Brier Score, Positive Brier Score, ECE, and AverageCE by \textbf{17.2\%}, \textbf{24.1\%}, \textbf{27.7\%}, and \textbf{19.2\%}, respectively, compared with Standard PRM. In particular, BayesianPRM achieves a substantially lower Positive Brier Score, suggesting that conservatism-aware belief calibration effectively reduces reward overestimation. Meanwhile, BayesianPRM also consistently improves calibration on benchmarks where Standard PRM already exhibits relatively small prediction errors, such as MathVista, suggesting that reliability-aware belief updating provides benefits beyond simply mitigating systematic overestimation.

\begin{table*}[t]
\centering
\caption{
\textbf{PRM-guided test-time scaling performance comparison.} We compare Best-of-\(N\) (\(N=16\)) and Instance-Adaptive Scaling (IAS, \(N_{\max}=16\)) on MathVision, MathVerse-VO, MathVista and MMStar.  {\color{improvegray}$\uparrow$} indicates that higher values are better. $\pm$ denotes the standard deviation across three random training seeds.
}
\resizebox{\textwidth}{!}{
\begin{tabular}{lccccc}
\toprule

& \textbf{MathVision}
& \textbf{MathVerse-VO}
& \textbf{MathVista}
& \textbf{MMStar}
& \textbf{Average} \\
\midrule

InternVL3-8B
& 29.93
& 45.05
& 65.90
& 66.27
& 51.79 \\

\midrule
\multicolumn{6}{c}{\textit{\textbf{Best-of-N ($N=16$)}}{\color{improvegray}$\uparrow$}} \\
\midrule

Self-Consistency
& 33.22
& 47.46
& 68.00
& 68.60
& 54.32 \\

Standard PRM
& 32.57{\small\textcolor{improvegray}{$\pm 0.78$}}
& 50.78{\small\textcolor{improvegray}{$\pm 1.17$}}
& 69.00{\small\textcolor{improvegray}{$\pm 1.10$}}
& 67.47{\small\textcolor{improvegray}{$\pm 0.61$}}
& 54.96 \\

BetaPRM
& 34.31{\small\textcolor{improvegray}{$\pm 0.70$}}
& \textbf{51.85}{\small\textcolor{improvegray}{$\pm 0.63$}}
& 68.66{\small\textcolor{improvegray}{$\pm 1.15$}}
& 67.77{\small\textcolor{improvegray}{$\pm 1.54$}}
& 55.65 \\

DreamPRM
& 34.79{\small\textcolor{improvegray}{$\pm 1.01$}}
& 51.22{\small\textcolor{improvegray}{$\pm 0.64$}}
& 69.13{\small\textcolor{improvegray}{$\pm 0.65$}}
& 67.18{\small\textcolor{improvegray}{$\pm 1.12$}}
& 55.58\\

\rowcolor{lightpurple}
BayesianPRM
& \textbf{35.86}{\small\textcolor{improvegray}{$\pm 0.70$}}
& 51.40{\small\textcolor{improvegray}{$\pm 1.11$}}
& \textbf{69.40}{\small\textcolor{improvegray}{$\pm 0.53$}}
& \textbf{69.33}{\small\textcolor{improvegray}{$\pm 0.60$}}
& \textbf{56.50}\\

\midrule
\multicolumn{6}{c}{
    \textbf{\textit{IAS ($N_{\max}=16$)}}
    {\color{improvegray}{$\uparrow$}}
} \\
\midrule

Standard PRM
& 32.19{\small\textcolor{improvegray}{$\pm 0.68$}}
& 50.46{\small\textcolor{improvegray}{$\pm 0.61$}}
& \textbf{69.11}{\small\textcolor{improvegray}{$\pm 0.98$}}
& 67.96{\small\textcolor{improvegray}{$\pm 0.44$}}
& 54.93\\

BetaPRM
& 34.26{\small\textcolor{improvegray}{$\pm 0.70$}}
& 50.58{\small\textcolor{improvegray}{$\pm 1.06$}}
& 68.66{\small\textcolor{improvegray}{$\pm 1.15$}}
& 68.18{\small\textcolor{improvegray}{$\pm 0.56$}}
& 55.42\\

DreamPRM
& 34.57{\small\textcolor{improvegray}{$\pm 0.63$}}
& 50.26{\small\textcolor{improvegray}{$\pm 1.12$}}
& 68.64{\small\textcolor{improvegray}{$\pm 1.06$}}
& 67.93{\small\textcolor{improvegray}{$\pm 0.65$}}
& 55.35\\

\rowcolor{lightpurple}
BayesianPRM
& \textbf{35.68}{\small\textcolor{improvegray}{$\pm 0.58$}}
& \textbf{50.67}{\small\textcolor{improvegray}{$\pm 0.60$}}
& 68.92{\small\textcolor{improvegray}{$\pm 0.51$}}
& \textbf{69.00}{\small\textcolor{improvegray}{$\pm 0.64$}}
& \textbf{56.07}\\
\bottomrule
\end{tabular}
}
\label{tab:tts}
\end{table*}

\subsection{Test-time Scaling Performance}
\label{sec:tts_performance}
\textbf{Experimental Setup.} We use the same $\mathcal{D}_\text{train}$ and $\mathcal{D}_\text{val}$ as in Section~\ref{sec:accuracy}. The test dataset $\mathcal{D}_\text{test}$ consists of the image-question pairs from the evaluation benchmarks. We compare the performance of Best-of-\(N\) and Instance-Adaptive Scaling (IAS)~\citep{park2026know}. For Best-of-\(N\), we set \(N=16\). For each sample in $\mathcal{D}_\text{test}$, InternVL3-8B generates 16 candidate responses, which are scored by the PRMs, and the response with the highest score is selected. For IAS, we set $N_\text{max}=16$ and dynamically allocate the rollout budget; the detailed procedure is provided in Appendix~\ref{sec:app_baselines}. We use Standard PRM, BetaPRM, and DreamPRM as baselines. For Best-of-$N$, we add an additional Self-Consistency~\citep{wang2022self} baseline.
\begin{wrapfigure}{r}{0.56\textwidth}
    \centering
    \vspace{-10pt}

    \begin{minipage}[t]{0.48\linewidth}
        \vspace{0pt}
        \centering

        \includegraphics[width=\linewidth]{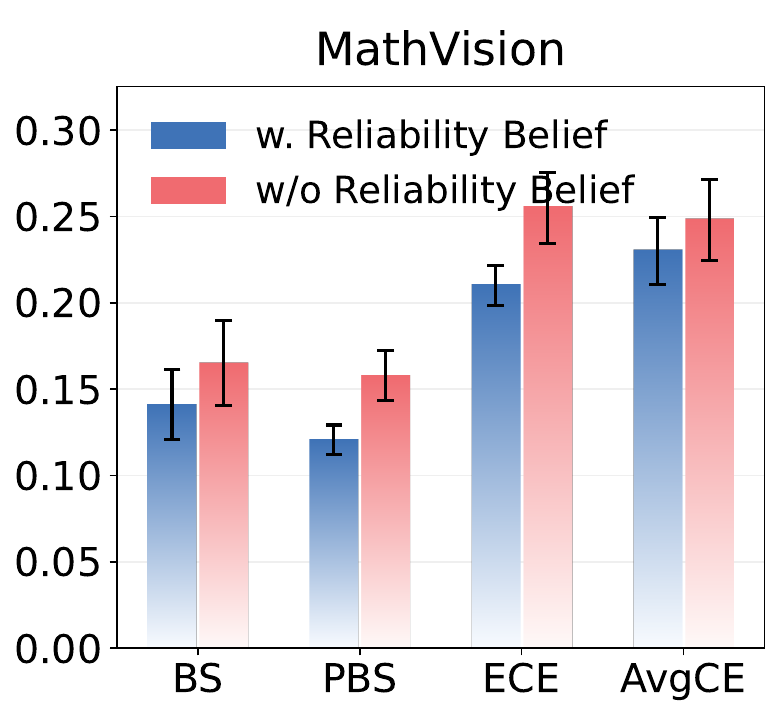}

        \captionof{figure}{PRM calibration results comparison on MathVision dataset.}
        \label{fig:cal_rel}
    \end{minipage}
    \hfill
    \begin{minipage}[t]{0.48\linewidth}
        \vspace{0pt}
        \centering

        \includegraphics[width=\linewidth]{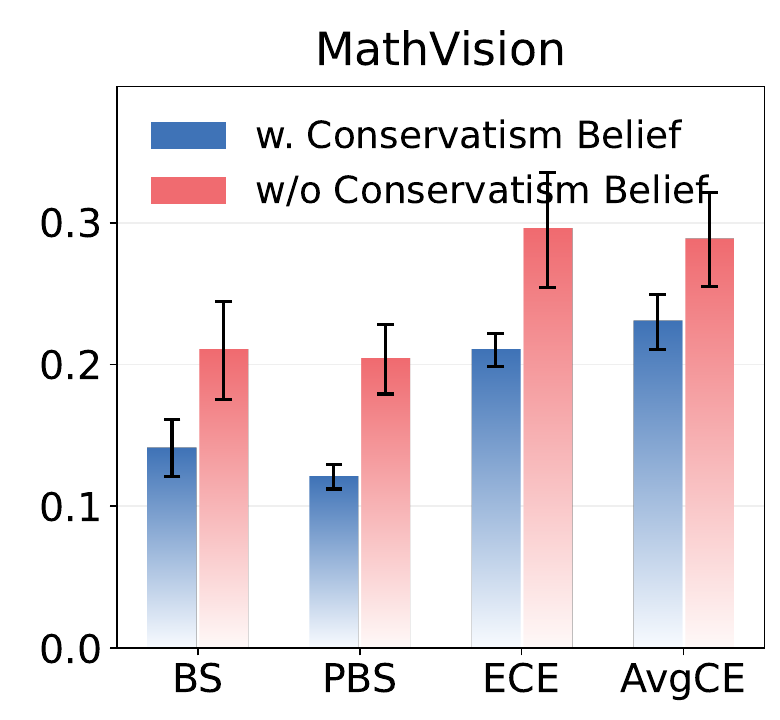}

        \captionof{figure}{PRM calibration results comparison on MathVision dataset.}
        \label{fig:cal_con}
    \end{minipage}

    \vspace{10pt}

    \begin{minipage}[t]{0.48\linewidth}
        \vspace{0pt}
        \centering
        \footnotesize
        \setlength{\tabcolsep}{3pt}
        \captionof{table}{Best-of-16 results with and without reliability belief.}
        \begin{tabular}{lcc}
            \toprule
            & \textbf{w. Rel.} & \textbf{w/o Rel.} \\
            \midrule
            Vision & \textbf{35.86} & 35.11 \\
            Verse  & \textbf{51.40} & 50.87 \\
            Vista  & \textbf{69.40} & 68.94 \\
            Star   & \textbf{69.33} & 68.56 \\
            \bottomrule
        \end{tabular}

        \label{tab:ablation_rel}
    \end{minipage}
    \hfill
    \begin{minipage}[t]{0.48\linewidth}
        \vspace{0pt}
        \centering
        \footnotesize
        \setlength{\tabcolsep}{3pt}
        \captionof{table}{Best-of-16 results with and without conservatism belief.}
        \begin{tabular}{lcc}
            \toprule
            & \textbf{w. Con.} & \textbf{w/o Con.} \\
            \midrule
            Vision & \textbf{35.86} & 35.09 \\
            Verse  & \textbf{51.40} & 50.83 \\
            Vista  & \textbf{69.40} & 68.80 \\
            Star   & \textbf{69.33} & 68.22 \\
            \bottomrule
        \end{tabular}

        \label{tab:ablation_con}
    \end{minipage}

    \vspace{-8pt}
\end{wrapfigure}


\textbf{Results.} As shown in Table~\ref{tab:tts}, BayesianPRM enhances test-time scaling performance under both Best-of-\(N\) and IAS. Under Best-of-$N$ with $N=16$, BayesianPRM achieves the highest average accuracy of \textbf{56.50}, outperforming Standard PRM, BetaPRM, and DreamPRM by 1.54, 0.85, and 0.92 points, respectively. BayesianPRM obtains the best performance on MathVision, MathVista, and MMStar, while remaining competitive on MathVerse-VO, where it is only 0.45 points below BetaPRM. With IAS and \(N_{\max}=16\), BayesianPRM again achieves the highest average accuracy of \textbf{56.07}, exceeding Standard PRM, BetaPRM, and DreamPRM by 1.14, 0.65, and 0.72 points, respectively. It achieves the best result on MathVision, MathVerse-VO, and MMStar. On MathVista, BayesianPRM reaches 68.92, only 0.19 points below Standard PRM. 
\subsection{Ablation Study}

In this section, we conduct an ablation study for BayesianPRM. We mainly examine the necessity of the reliability-aware belief updating and conservatism-aware belief calibration.

\textbf{Reliability-Aware Belief Updating.} To investigate the necessity of reliability-aware belief updating, we first train the reward ensemble using the standard cross-entropy loss, and then directly reweight the uniform prior toward lower reward predictions, omitting the belief network training step. Figure~\ref{fig:cal_rel} compares the calibration metrics on MathVision with and without reliability-aware belief updating, where BS, PBS, AvgCE denote Brier Score, Positive Brier Score, and Average Calibration Error. Removing reliability-aware belief updating increases all four metrics, indicating worse calibration and less accurate reward estimates. Table~\ref{tab:ablation_rel} compares the Best-of-16 results on four datasets, where Vision, Verse, Vista, and Star correspond to MathVision, MathVerse-VO, MathVista, and MMStar, respectively. Performance consistently drops after removing reliability-aware belief updating. Together, these results show that reliability-aware belief updating is important for both reward calibration and downstream test-time scaling.

\textbf{Conservatism-Aware Belief Calibration.} To examine the necessity of conservatism-aware belief calibration, we directly use the reliability-aware belief for reward estimation. Figure~\ref{fig:cal_con} and Table~\ref{tab:ablation_con} report the calibration results on MathVision and the test-time scaling results across four benchmarks, respectively. It is clear that removing conservatism-aware belief calibration consistently degrades both calibration and test-time scaling performance, indicating the importance of this step. More ablation study results are deferred to Appendix~\ref{sec:app_ablation} due to space limit.

\section{Conclusion}

In this work, we investigate the uncertainty in process reward modeling and identify two main sources: PRM uncertainty and policy uncertainty. To jointly address both uncertainties, we propose BayesianPRM, which formulates process reward modeling as Bayesian learning and consists of two key steps: reliability-aware belief updating and conservatism-aware belief calibration. Experimental results across various multimodal reasoning benchmarks indicate that BayesianPRM consistently improves both PRM reward prediction and test-time scaling performance.

\section{AI Use Statement}

In this work, we use LLMs to improve the grammar and presentation of the manuscript and to assist with parts of the experimental code. We affirm that LLMs were not involved in the core research contributions, including the conceptualization of the idea, the development of methodology, and theoretical proofs. All LLM-assisted content and code were reviewed and verified by the authors. The authors take full responsibility for the final content of this work.

\section{Ethics Statement}

This research does not involve human subjects, animal experiments, or sensitive datasets, nor does it
include applications that may pose ethical risks. We believe that this study does not raise any ethical
concerns.

\section{Reproducibility Statement}

We have made efforts to ensure that our work is reproducible. We provide the training scripts of BayesianPRM in the supplementary materials. Additionally, we provide the detailed experimental setup in Appendix~\ref{sec:app_setup}, and the implementation details in Appendix~\ref{sec:app_details}.

\bibliography{iclr2027_conference}
\bibliographystyle{iclr2027_conference}

\newpage
\appendix

\section{Related Work}

\textbf{Process Reward Models.} Reward models play a crucial role in reinforcement learning (RL)~\citep{schulman2017proximal,ahmadian2024back,hu2025reinforce++,guo2025deepseek} and test-time scaling~\citep{snell2024scaling,huang2025efficient,chae2026web,you2026parallel} for reasoning tasks. Process reward models (PRMs)~\citep{lightman2024let,du2025mm,sun2025freeprm,zheng2025survey,pala2025error} provide step-level rewards for intermediate reasoning steps, and have been shown more effective than outcome reward models (ORMs)~\citep{uesato2022solving,cobbe2021training,shao2024deepseekmath}. PRMs can be categorized into value-based PRMs~\citep{li2026process,li2026training,park2026know} and generative PRMs~\citep{zhao2026genprm,khalifa2025process,liang2025generative,zhang2026gm}. Our work falls into the scope of value-based PRMs, where PRMs are required to predict the process reward that reflects the success rate of a given prefix estimated by Monte-Carlo sampling~\citep{wang2024math,luo2024improve}. However, due to the inherent limitations of supervised learning~\citep{hastie2008overview}, value-based PRMs may produce inaccurate process rewards. To address this, recent works~\citep{li2026process,park2026know,ma2026distributional} have modeled PRMs as distributions rather than point estimates. BayesianPRM can also be seen as a distributional PRM, where the posterior belief represents the probability of different reward hypotheses, and thus induces a distribution over possible process reward values.

\textbf{Test-Time Scaling.} Test-time scaling~\citep{snell2024scaling,brown2024large,gou2024critic,zhang2025survey} improves reasoning quality by allocating additional computation at inference time. Representative methods include \textit{parallel scaling} such as self-consistency~\citep{brown2024large,irvine2023rewarding,song2025good,nguyen2024consistent,wu2026more}, PlanSearch~\citep{wang2025planning}, and \textit{sequential scaling} such as self-refine~\citep{zhang2024small,madaan2023self}, sequential revision~\citep{lee2025evolving}, ReAct~\citep{yao2022react}, and so on. A common and simple method is Best-of-$N$~\citep{lightman2024let}, which selects the best solution among multiple candidates evaluated by a verifier or reward model. Some works have also explored to adaptively allocate the compute budget to improve computational efficiency, such as Instance-Adaptive Scaling (IAS)~\citep{park2026know} and Adaptive Computation Allocation (ACA)~\citep{li2026process}.

\textbf{Uncertainty Quantification for LLMs.} Uncertainty quantification plays a crucial role in assessing the confidence of LLM outputs~\citep{shorinwa2025survey}. Prior studies~\citep{han2024towards,han2026uncertainty} have shown a strong correlation between the uncertainty and correctness of LLM responses. Major uncertainty quantification methods for LLMs include (1) \textit{white-box methods}, which quantify the uncertainty with visible metrics such as entropy~\citep{malinin2020uncertainty}, perplexity~\citep{fadeeva2024fact}, token log-likelihood~\citep{manakul2023selfcheckgpt}, and so on; (2) \textit{black-box methods}, which quantify the uncertainty entirely from the LLM responses to a given prompt. Representative methods include self-consistency~\citep{brown2024large}, self-verbalized methods~\citep{band2024linguistic,lin2022teaching,mielke2022reducing,guo2026llms} that train the LLM to provide a natural-language estimate of its
confidence, and so on. While these studies primarily focus on quantifying the uncertainty of LLM outputs, our work explores to capture the uncertainty of PRMs with Bayesian learning.

\textbf{Ensemble Reward Models.} Ensemble learning~\citep{dong2020survey,krogh1994neural} is a classical machine learning approach that improves predictive performance and uncertainty estimation for neural networks. Previous studies~\citep{coste2024reward,eisenstein2023helping} have shown that ensemble reward models could effectively mitigate reward over-optimization~\citep{gao2023scaling}. Among all prior works, HyRe~\citep{lee2026inference} and ActivePRM~\citep{duan2025efficient} are the closest to our work. HyRe adjusts the weights of reward ensemble at test time to adapt to different preferences, while ActivePRM leverages ensemble disagreement for active learning. In contrast, BayesianPRM captures uncertainty through Bayesian belief updating at both training and test time, distinguishing our approach from prior methods.

\section{Proofs}

\subsection{Proof of Proposition~\ref{prop:bayes_optimal}}

\begin{proof}
Since the objective in Equation~\ref{eq:elbo} is separable across different prefixes, we can optimize the reliability-aware belief for each prefix $c_t$ independently. For a fixed $c_t$, minimizing Equation~\ref{eq:elbo} is equivalent to solving
\begin{equation*}
\bm{\alpha}^\text{rel}(c_t)=\arg\max_{\bm{\alpha}\in\Delta^{M-1}}
\left\{
\sum_{m=1}^{M}\alpha_m\ell_{t,m}
-
\beta_1
D_{\mathrm{KL}}
\left(
\bm{\alpha}\Vert
\bm{\alpha}^{\text{prior}}(c_t)
\right)
\right\},
\end{equation*}
where $\Delta^{M-1}=\{\bm{\alpha}\in\mathbb{R}_{\geq 0}^{M}:\sum_{m=1}^{M}\alpha_m=1\}$ denotes the probability simplex. Expanding the KL divergence gives
\begin{equation*}
\bm{\alpha}^\text{rel}(c_t)=\arg \max_{\bm{\alpha}\in\Delta^{M-1}}
\left\{
\sum_{m=1}^{M}\alpha_m\ell_{t,m}
-
\beta_1
\sum_{m=1}^{M}
\alpha_m
\log
\frac{\alpha_m}
{\alpha_m^{\text{prior}}(c_t)}
\right\}.
\end{equation*}

Introducing a Lagrange multiplier $\lambda$ for the constraint
$\sum_{m=1}^{M}\alpha_m=1$ and multipliers
$\{\mu_m\}_{m=1}^{M}$ for the constraints
$\alpha_m\geq 0$, the Lagrangian function is
\begin{equation*}
\begin{aligned}
\mathcal{J}(\bm{\alpha},\lambda,\bm{\mu})
=
\sum_{m=1}^{M}\alpha_m\ell_{t,m}
-
\beta_1
\sum_{m=1}^{M}
\alpha_m
\log
\frac{\alpha_m}
{\alpha_m^{\text{prior}}(c_t)}
+
\lambda
\left(
\sum_{m=1}^{M}\alpha_m-1
\right)
+
\sum_{m=1}^{M}\mu_m\alpha_m,
\end{aligned}
\end{equation*}
where $\mu_m\geq 0$ for all $m$. Due to the presence of the KL term, the optimal solution lies in the interior of the
probability simplex, i.e.,
$\alpha_m>0$ for every $m$. By complementary slackness,
$\mu_m\alpha_m=0$, which immediately implies
$\mu_m=0$. Taking the derivative of the Lagrangian function and setting it to zero yields:
\begin{equation*}
\ell_{t,m}
-\beta_1
\left(
\log
\frac{\alpha_m}
{\alpha_m^{\text{prior}}(c_t)}
+1
\right)
+\lambda=0.
\end{equation*}
Therefore,
\begin{equation*}
\alpha_m
=
\alpha_m^{\text{prior}}(c_t)
\exp\left(\frac{\ell_{t,m}}{\beta_1}\right)
\exp\left(\frac{\lambda-\beta_1}{\beta_1}\right).
\end{equation*}
The last exponential term is shared by all reward hypotheses and can be determined by the normalization constraint. Enforcing
$\sum_{m=1}^{M}\alpha_m=1$ gives
\begin{equation*}
\alpha_m^{\text{rel}\star}(c_t)
= \alpha_m(c_t) =
\frac{
\alpha_m^{\text{prior}}(c_t)
\exp\left(\ell_{t,m}/\beta_1\right)
}{
\sum_{j=1}^{M}
\alpha_j^{\text{prior}}(c_t)
\exp\left(\ell_{t,j}/\beta_1\right)
}.
\end{equation*}
Since
\begin{equation*}
\ell_{t,m}
=
\log p(K_t\mid N_t,c_t,\theta_m),
\end{equation*}
we further obtain
\begin{equation*}
\alpha_m^{\text{rel}\star}(c_t)
\propto
\alpha_m^{\text{prior}}(c_t)
\cdot
p(K_t\mid N_t,c_t,\theta_m)^{1/\beta_1}.
\end{equation*}

For $\beta_1>0$, the objective is strictly concave with respect to
$\bm{\alpha}$ in the interior of the simplex, since the KL term is strictly convex. Hence, the stationary point above is the unique global optimum. In particular, when $\beta_1=1$, we recover
\begin{equation*}
\alpha_m^{\text{rel}\star}(c_t)
\propto
\alpha_m^{\text{prior}}(c_t)
p(K_t\mid N_t,c_t,\theta_m),
\end{equation*}
which is exactly the standard Bayesian posterior update.
    
\end{proof}

\subsection{Proof of Proposition~\ref{prop:likelihood}}
\label{app:proof_xi}
\begin{proof}
    For a prefix $c_t$, let 
$s_{(1)}\leq s_{(2)}\leq\cdots\leq s_{(M)}$
denote the order statistics of
$\{s(\theta_m;c_t)\}_{m=1}^{M}$,
where
$\theta_1,\ldots,\theta_M\overset{\mathrm{i.i.d.}}{\sim}p(\theta)$.
Let $\mathcal F(s)$ denote the cumulative distribution function of $s(\theta;c_t)$ induced by the prior $p(\theta)$, respectively.

According to the standard distribution of order statistics, the density of a hypothesis associated with the $m$-th ordered score can be obtained by reweighting the prior hypothesis density according to the probability that its score occupies the $m$-th rank:
\begin{equation}
\label{eq:pdf_order}
p_{(m)}(\theta)
=
\frac{M!}{(m-1)!(M-m)!}
[\mathcal F(s(\theta;c_t))]^{m-1}
[1-\mathcal F(s(\theta;c_t))]^{M-m}
p(\theta).
\end{equation}

Let $\theta_{(m)}$ and $\alpha^\text{hyb}_{(m)}(c_t)$ denote the reward hypothesis and hybrid belief associated with $s_{(m)}$, respectively. We define the weighting coefficients $\{\bar{\alpha}_m(\theta;c_t)\}_{m=1}^M$ as
\begin{equation*}
\bar{\alpha}_{m}(\theta;c_t)
=
\mathbb{E}_{\theta_{1:M}\sim p(\theta)}
\left[
\alpha_{(m)}^{\text{hyb}}(c_t)
\mid
\theta_{(m)}=\theta
\right],\qquad m=1,2,...,M.
\end{equation*}
Intuitively, $\bar{\alpha}_m(\theta;c_t)$ measures the expected belief weight received by a hypothesis $\theta$ when it occupies the $m$-th rank under the score function $s(\theta;c_t)$. Recall that our goal is to construct a posterior belief
$\tilde p(\theta|c_t)$ such that the expected reward under this posterior
is equivalent to the weighted expectation over the reward
hypotheses sampled from the prior $p(\theta)$. We first consider the latter:
\begin{equation}
\begin{aligned}
\mathbb{E}_{\theta_{1:M}\overset{\mathrm{i.i.d.}}{\sim}p(\theta)}
\left[
\sum_{m=1}^{M}
\alpha_{m}^{\text{hyb}}(c_t)
r_{m}(c_t)
\right]&=\mathbb{E}_{\theta_{1:M}\overset{\mathrm{i.i.d.}}{\sim}p(\theta)}
\left[
\sum_{m=1}^{M}
\alpha_{(m)}^{\text{hyb}}(c_t)
r_{\theta_{(m)}}(c_t)
\right]\\
&=\sum_{m=1}^M\mathbb{E}_{\theta_{1:M}\overset{\mathrm{i.i.d.}}{\sim}p(\theta)}\left[\alpha_{(m)}^{\text{hyb}}(c_t)
r_{\theta_{(m)}}(c_t)\right]\\
&=\sum_{m=1}^M\mathbb{E}_{\theta_{(m)}}\left[r_{\theta_{(m)}}\mathbb{E}_{\theta_{1:M}\overset{\mathrm{i.i.d.}}{\sim}p(\theta)}\left[\alpha^\text{hyb}_{(m)}(c_t)\mid\theta_{(m)}\right]\right]\\
&=\sum_{m=1}^M\mathbb{E}_{\theta_{(m)}}\left[\bar{\alpha}_m(\theta_{(m)};c_t)r_{\theta_{(m)}}\right].
\end{aligned}
\label{eq:weighted_order_expectation}
\end{equation}
The third equation holds by the law of total expectation. According to the probability density of the $m$-th order statistic in Equation~\ref{eq:pdf_order}, we have
\begin{equation}
\label{eq:former}
    \begin{aligned}
        &\sum_{m=1}^M\mathbb{E}_{\theta_{(m)}}\left[\bar{\alpha}_m(\theta_{(m)};c_t)r_{\theta_{(m)}}\right]=\sum_{m=1}^M\int_{\Theta}
r_{\theta}(c_t)
\bar{\alpha}_{m}(\theta;c_t)
p_{(m)}(\theta)
d\theta \\
&=\sum_{m=1}^{M}
\int_{\Theta}
r_{\theta}(c_t)
\bar{\alpha}_{m}(\theta;c_t)
\frac{M!}{(m-1)!(M-m)!}\cdot
[\mathcal F(s(\theta;c_t))]^{m-1}
[1-\mathcal F(s(\theta;c_t))]^{M-m}
p(\theta)d\theta \\
&=
\int_{\Theta}
r_{\theta}(c_t)
\sum_{m=1}^{M}\bar{\alpha}_{m}(\theta;c_t)
\frac{M!}{(m-1)!(M-m)!}\cdot
[\mathcal F(s(\theta;c_t))]^{m-1}
[1-\mathcal F(s(\theta;c_t))]^{M-m}
p(\theta)d\theta.
    \end{aligned}
\end{equation}

On the other hand, the posterior belief satisfies $\tilde p(\theta|c_t) = p(\theta)\xi(\theta;c_t)$, and therefore,
\begin{equation}
\label{eq:latter}
\mathbb{E}_{\theta\sim\tilde p(\theta|c_t)}
[r_\theta(c_t)]
=
\int_{\Theta}
r_\theta(c_t)
\xi(\theta;c_t)
p(\theta)d\theta .
\end{equation}

By comparing the two integral forms in Equation~\ref{eq:former} and Equation~\ref{eq:latter}, we obtain
\begin{equation*}
\xi(\theta;c_t)
=
\sum_{m=1}^{M}
\bar{\alpha}_{m}(\theta;c_t)
\frac{M!}{(m-1)!(M-m)!}
[\mathcal F(s(\theta;c_t))]^{m-1}
[1-\mathcal F(s(\theta;c_t))]^{M-m},
\end{equation*}

which is exactly the likelihood ratio stated in Proposition~\ref{prop:likelihood}.

The last step is to verify that $\tilde p(\theta|c_t)=p(\theta)\xi(\theta;c_t)$ is a valid probability distribution, that is, $\int_{\Theta}
p(\theta)\xi(\theta;c_t)d\theta=1$. The derivation is presented as follows.
\begin{equation*}
\begin{aligned}
\int_{\Theta}
p(\theta)\xi(\theta;c_t)d\theta
&=\sum_{m=1}^M\int_{\Theta}\bar{\alpha}_m(\theta;c_t)\cdot
\frac{M!}{(m-1)!(M-m)!}
[\mathcal F(s(\theta;c_t))]^{m-1}
[1-\mathcal F(s(\theta;c_t))]^{M-m}
p(\theta)d\theta\\
&=\sum_{m=1}^M\int_{\Theta}\bar{\alpha}_m(\theta;c_t)\cdot p_{(m)}(\theta) d\theta\\
&=\sum_{m=1}^M\int_{\Theta}\mathbb{E}_{\theta_{1:M}}\left[\alpha^\text{hyb}_{(m)}(c_t)\mid\theta_{(m)}=\theta\right]p_{(m)}(\theta)d\theta\\
&=\sum_{m=1}^M\mathbb{E}_{\theta_{(m)}}\left[\mathbb{E}_{\theta_{1:M}}\left[\alpha^\text{hyb}_{(m)}(c_t)\mid\theta_{(m)}=\theta\right]\right]\\
&=
\sum_{m=1}^{M}
\mathbb{E}_{\theta_{1:M}}
[
\alpha_{(m)}^{\text{hyb}}(c_t)
]\\
&=
\mathbb{E}_{\theta_{1:M}}
\left[
\sum_{m=1}^{M}
\alpha_{(m)}^{\text{hyb}}(c_t)
\right]\\
&=1.
\end{aligned}
\end{equation*}
The fifth equation holds by the law of total expectation, and the last equation holds since $\sum_{m=1}^M\alpha^\text{hyb}_{(m)}(c_t)=1$. Hence, $\tilde p(\theta|c_t)=p(\theta)\xi(\theta;c_t)$ defines a valid posterior belief, then we conclude the proof.

\end{proof}

\section{Experimental Setup}
\label{sec:app_setup}
\subsection{Backbones and Datasets}

\textbf{Backbones.} We instantiate BayesianPRM and other baselines including standard PRM, BetaPRM and DreamPRM with the same InternVL3-8B backbone across all experiments. For each method, we insert a \texttt{<prm>} marker after the reasoning prefix \(c_t\), and supervise the PRMs to predict the process reward at this marker position. For BayesianPRM, we attach multiple lightweight linear heads to the marker hidden state to instantiate the reward ensemble. Compared with the base model, these additional heads introduce only a negligible number of parameters. During fine-tuning, we freeze the vision encoder while updating the language model and the added linear layers. All experiments are conducted on 4$\times$H100 GPUs, and a complete fine-tuning run takes approximately 20 hours.

\textbf{Training Datasets.} Following BetaPRM, we train all PRMs on VisualPRM400k-v1.1, which is constructed using Monte-Carlo rollouts generated by the InternVL2.5 series models. We use the version ``-v1.1" because it provides the success-count information required for training BayesianPRM. For each reasoning prefix, the dataset reports not only the empirical success rate but also the number of successful trajectories among 16 Monte-Carlo rollouts, which is required to compute the binomial likelihood in Equation~\ref{eq:elbo}. VisualPRM400K-v1.1 comprises multiple sub-datasets across diagram understanding, chart and document
QA, general visual question answering, science reasoning, and mathematical/geometry reasoning. For each sub-dataset, we randomly split the data at the problem level, assigning 80\% to \(\mathcal{D}_{\text{train}}\) and the remaining 20\% to \(\mathcal{D}_{\text{val}}\). The reward ensemble is trained on \(\mathcal{D}_{\text{train}}\) using the standard cross-entropy loss, while the belief network is trained on the held-out \(\mathcal{D}_{\text{val}}\) according to Equation~\ref{eq:elbo}. We use different data for the two training stages to prevent the belief network from overfitting to samples already seen by the reward ensemble, which could lead to overly optimistic reliability estimates. Training the belief network on $\mathcal{D}_\text{val}$ allows it to learn reliability patterns that generalize to unseen reasoning prefixes. For the other baselines, we use the full dataset for training.

\textbf{Evaluation Datasets.} In Sections~\ref{sec:accuracy} and~\ref{sec:tts_performance}, we evaluate PRMs on MathVision, MathVerse, MathVista, and MMStar. We construct the evaluation datasets for each benchmark as follows. We first sample a subset of image-question pairs from the original datasets and use InternVL3-8B to generate a response for each pair. Each response is segmented into reasoning steps, from which we extract reasoning prefixes at different positions along the reasoning trajectory. To control the annotation cost, we randomly sample 1,000 prefixes from the resulting pool for each benchmark. For every selected prefix, we use InternVL3-8B to generate 16 Monte-Carlo rollouts. Each rollout is judged by Qwen2.5-32B-Instruct for correctness against the ground-truth answer. We then compute the empirical success rate of each prefix as the fraction of correct rollouts among the 16 samples. Finally, the reasoning prefixes and corresponding empirical success rates constitute the evaluation datasets.

\subsection{Evaluation Metrics}
\label{sec:app_metrics}
In this part, we introduce the calculation of the PRM accuracy evaluation metrics used in Section~\ref{sec:accuracy}. Let $\hat{q}_t\in[0,1]$ denote the process reward predicted by a PRM for the reasoning prefix $c_t$, and let $q_t\in[0,1]$ denote its empirical success
rate estimated from Monte-Carlo rollouts. We use the following metrics to
evaluate the accuracy of PRM predictions. Lower values indicate
better performance for all metrics. We use $B=10$ bins for all bin-based
calibration metrics.

\textbf{Brier Score.}
The Brier Score measures the mean squared error between the predicted process
reward and the empirical success rate:
\begin{equation*}
    \mathrm{Brier}
    =
    \frac{1}{T}
    \sum_{t=1}^{T}
    \left(\hat{q}_t-q_t\right)^2.
\end{equation*}
It measures the overall accuracy of probabilistic reward predictions, regardless
of whether the prediction error corresponds to overestimation or
underestimation.

\textbf{Positive Brier Score.}
To specifically quantify reward overestimation, we further use the Positive
Brier Score:
\begin{equation*}
    \mathrm{PosBrier}
    =
    \frac{1}{T}
    \sum_{t=1}^{T}
    \left[
    \max\left(\hat{q}_t-q_t,0\right)
    \right]^2.
\end{equation*}
Different from the standard Brier Score, Positive Brier only penalizes cases
where the predicted process reward exceeds the empirical success rate, and
therefore directly reflects the severity of reward overestimation.

\textbf{Expected Calibration Error (ECE).}
ECE measures the discrepancy between predicted rewards and empirical success
rates after partitioning the prediction interval $[0,1]$ into $B$ equal-width
bins. Let $\mathcal{I}_b$ denote the set of samples whose predictions fall into
the $b$-th bin. We define
\begin{equation*}
    \mathrm{Pred}(\mathcal{I}_b)
    =
    \frac{1}{|\mathcal{I}_b|}
    \sum_{t\in\mathcal{I}_b}
    \hat{q}_t,
    \qquad
    \mathrm{Succ}(\mathcal{I}_b)
    =
    \frac{1}{|\mathcal{I}_b|}
    \sum_{t\in\mathcal{I}_b}
    q_t.
\end{equation*}
ECE is then computed as
\begin{equation*}
    \mathrm{ECE}
    =
    \sum_{b=1}^{B}
    \frac{|\mathcal{I}_b|}{T}
    \left|
    \mathrm{Pred}(\mathcal{I}_b)
    -
    \mathrm{Succ}(\mathcal{I}_b)
    \right|.
\end{equation*}

\textbf{Average Calibration Error (AverageCE).}
AverageCE uses the same fixed-width bins as ECE, but assigns equal weight to
each non-empty bin instead of weighting bins by their sample counts. Let
$\mathcal{B}^{+}$ denote the set of non-empty bins. Then,
\begin{equation*}
    \mathrm{AverageCE}
    =
    \frac{1}{|\mathcal{B}^{+}|}
    \sum_{b\in\mathcal{B}^{+}}
    \left|
    \mathrm{Pred}(\mathcal{I}_b)
    -
    \mathrm{Succ}(\mathcal{I}_b)
    \right|.
\end{equation*}
Compared with ECE, AverageCE gives the same importance to calibration errors
in different prediction ranges regardless of how many samples fall into each
range.

\subsection{Baselines}

\label{sec:app_baselines}

\textbf{BetaPRM.} The core of BetaPRM~\citep{li2026process} lies in modeling the process reward as a Beta distribution, rather than using the point estimation. Specifically, BetaPRM assume a binomial generative process for the successful continuations: $K_t\mid q_t\sim\mathrm{Binomial}(N,q_t)$. It models $q_t$ with a Beta belief, $q_t\sim\mathrm{Beta}(\alpha_t,\beta_t)$. Then the Beta distribution is reparameterized by its mean $\mu_t=\alpha_t/(\alpha_t+\beta_t)$ and concentration $\kappa_t=\alpha_t+\beta_t$. At the $t$-th \texttt{<prm>} marker, the base model produces a hidden state $h_t$ and vocabulary logits $z_t$. $\mu_t$ is computed as:
\begin{equation*}
    \mu_t=\frac{\exp(z^{\mathrm{yes}}_t)}{\exp(z^{\mathrm{yes}}_t)+\exp(z^{\mathrm{no}}_t)},
\end{equation*}
where $z^{\mathrm{yes}}_t$ and $z^{\mathrm{no}}_t$ denote the logits of two tokens \texttt{yes} and \texttt{no}. The reliability is represented by the concentration parameter $\kappa_t$:
\begin{equation*}
    \kappa_t=\mathrm{softplus}(g_\phi(h_t))+\kappa_{\mathrm{min}},
\end{equation*}
where $g_\phi$ is a lightweight linear head and $\kappa_{\min}$ is a small positive number for numerical stability. Then $\alpha_t=\mu_t\kappa_t$ and $\beta_t=(1-\mu_t)\kappa_t$. 

BetaPRM is trained with the Beta-Binomial loss,
\begin{equation*}
    \mathcal{L}_{\text{Beta-Binomial}}=-\frac{1}{|P|}\sum_{t\in\mathcal{P}}\log p(K_t\mid N,\alpha_t,\beta_t),
\end{equation*}
where
\begin{equation*}
    p(K_t\mid N,\alpha_t,\beta_t)=\binom{N}{K_t}\frac{\text{Beta}(K_t+\alpha_t,N-K_t+\beta_t)}{\text{Beta}(\alpha_t,\beta_t)}
\end{equation*}
represents the predictive probability of the observed count. BetaPRM also adds a regularization loss to penalize the product of disagreement and concentration:
\begin{equation*}
    \mathcal{L}_\text{reg}=\lambda_\text{reg}\frac{1}{|\mathcal{P}|}\sum_{t\in\mathcal{P}}\left|\mathrm{sg}(\mu_t)-\frac{K_t}{N}\right|\kappa_t,
\end{equation*}
where $\mathrm{sg}(\cdot)$ is the stop-gradient operation. The overall training loss is
\begin{equation*}
    \mathcal{L}=\mathcal{L}_\text{Beta-Binomial}+\mathcal{L}_\text{reg}.
\end{equation*}
We reproduce BetaPRM following its official implementation\footnote{\textcolor{brown}{https://github.com/JinYuanLi0012/Beta-Binomial-PRM.git}}.

\textbf{DreamPRM.} DreamPRM~\citep{cao2026dreamprm} is a domain-reweighted multimodal process reward training framework that addresses the dataset quality imbalance issue in multimodal reasoning. DreamPRM dynamically adjusts the importance of different training domains with bi-level optimization~\citep{talbi2013taxonomy}. It solves the following optimization problem:
\begin{equation*}
    \begin{aligned}
&\min_{\alpha}\quad  \mathcal{L}_\text{meta}\bigl(\mathcal{D}_\text{meta},\phi^*(\alpha)\bigr) \\
\text{s.t.}\quad & \phi^*(\alpha)
= \arg\min_{\phi}\mathcal{L}_\text{tr}(\mathcal{D}_\text{tr},\phi,\alpha)
\end{aligned}
\end{equation*}
where $\mathcal{L}_\text{tr}(\mathcal{D}_\text{tr},\phi,\alpha)=\sum_{k=1}^K\alpha_k\mathcal{L}_\text{tr}(\mathcal{D}_k,\phi)$, $\alpha_k$ is the weight coefficient for dataset $\mathcal{D}_k$, $\mathcal{L}_\text{tr}(\mathcal{D}_k,\phi)$ is the standard PRM loss with MSE loss on $\mathcal{D}_k$. $\mathcal{D}_\text{meta}$ is a validation dataset for testing the generalization of PRMs. $\mathcal{L}_\text{meta}$ is the meta loss on $\mathcal{D}_\text{meta}$ to penalize the inconsistency of the aggregated PRM score with the ground-truth reward. We implement DreamPRM based on its official codebase\footnote{\textcolor{brown}{https://github.com/coder-qicao/DreamPRM.git}}, and replace its training dataset with VisualPRM400K-v1.1.

\textbf{QR-PRM.}
QR-PRM~\citep{park2026know} calibrates the PRM for a target policy
through quantile regression on a calibration set and further uses the calibrated reward for
instance-adaptive test-time scaling (IAS). Its key observation is that the PRMs often
overestimate the process reward due to mismatch of the rollout policy at test time and the policy for collecting the supervision data.

\textit{Calibration Set Construction.}
Given a target policy $\pi$, QR-PRM first constructs a policy-specific
calibration set. For each validation question $q$, $\pi$ generates
multiple reasoning trajectories. Each trajectory is decomposed into reasoning
prefixes, and for every prefix $c_t$, the empirical successful rate $\tilde{p}_t$ is computed with Monte-Carlo rollouts generated by $\pi$. The resulting tuples $\mathcal{D}_{\mathrm{cal}}
    =
    \left\{
    (c_t,\tilde{p}_t)
    \right\}$
constitute the calibration set used to adapt the PRM to the target policy.

\textit{Quantile Regression.} QR-PRM predicts quantiles of the process reward on $\mathcal{D}_\text{cal}$. Specifically, the
original PRM linear head is replaced with a quantile
prediction head. For a target quantile level $\tau_j$, the model predicts $\hat{r}_t^{(\tau_j)}
    =
    \sigma\left(g_{\phi,j}(h_t)\right)$,
where $h_t$ denotes the hidden representation of $c_t$ and
$g_{\phi}$ is the quantile prediction head. The PRM is fine-tuned on $\mathcal{D}_{\mathrm{cal}}$ using the weighted
quantile loss
\begin{equation*}
\mathcal{L}
=
\frac{1}{|\mathcal{D}_{\mathrm{cal}}|}
\sum_{t}
\frac{1}{J}
\sum_{j=1}^{J}
\left[
\tau_j
\max\left(0,\tilde{p}_t-\hat{r}_t^{(\tau_j)}\right)
+
(1-\tau_j)
\max\left(0,\hat{r}_t^{(\tau_j)}-\tilde{p}_t\right)
\right].
\end{equation*}
The asymmetric loss allows different quantiles of the success probability to
be estimated. The original work predicts the $10\%$, $50\%$, and $90\%$
quantiles. Since QR-PRM is not designed as a multimodal PRM, we implement our own multimodal version following its official codebase\footnote{\textcolor{brown}{https://github.com/young-j-park/prm-calibration.git}}.

\textit{Instance-Adaptive Scaling (IAS).}
QR-PRM further uses the calibrated process reward to dynamically allocate
test-time computation. For Best-of-$N$, given a calibrated reward
$\hat{r}^{(\tau)}$ and a target success probability $C$, the required number
of samples is estimated as
\begin{equation*}
    N_{\mathrm{IAS}}
    =
    \min
    \left(
    \left\lceil
    \frac{\log(1-C)}
         {\log(1-\hat{r}^{(\tau)})}
    \right\rceil,
    N_{\max}
    \right),
\end{equation*}
where $N_{\max}$ denotes the maximum sampling budget.
Therefore, more difficult questions receive
larger budgets. The original work adopts a conservative lower quantile
($\tau=0.1$) for IAS.

\textbf{Self-Consistency.} Self-Consistency~\citep{wang2022self} is a verifier-free test-time scaling baseline that improves reasoning by aggregating multiple sampled responses. Given a problem, we sample \(N\) candidate reasoning trajectories from the rollout policy and extract their final answers. The answer receiving the majority vote is selected as the final prediction. Unlike PRM-guided Best-of-\(N\), Self-Consistency does not rely on an external reward model and selects the output solely based on agreement among sampled responses.

\textbf{Temperature Scaling.}
Temperature scaling (TS) is a parametric post-hoc calibration method that
rescales the output logits of a pretrained PRM using a single temperature
parameter. Suppose the PRM produces two logits $z_t^{\mathrm{yes}}$ and
$z_t^{\mathrm{no}}$ at the \texttt{<prm>} marker. The calibrated process reward
is computed as
\begin{equation*}
    \hat r_t^{\mathrm{TS}}
    =
    \frac{
    \exp\left(z_t^{\mathrm{yes}}/T\right)
    }{
    \exp\left(z_t^{\mathrm{yes}}/T\right)
    +
    \exp\left(z_t^{\mathrm{no}}/T\right)
    },
\end{equation*}
where $T>0$ is the temperature parameter. The parameter $T$ is selected on the
calibration set by minimizing a calibration objective, such as the Brier score.
Temperature scaling preserves the ranking induced by the original logits while
adjusting the sharpness of the predicted probabilities.

\textbf{Isotonic Regression.}
Isotonic regression (IR) is a non-parametric calibration method that learns a
monotonic mapping from the original PRM prediction to the empirical success
probability. Given calibration pairs
$\{(\hat r_t,\tilde p_t)\}$, where $\hat r_t$ is the original PRM reward and
$\tilde p_t$ is the corresponding empirical success rate, isotonic regression
learns a non-decreasing function $g_{\mathrm{IR}}:[0,1]\rightarrow[0,1]$ by
solving
\begin{equation*}
    g_{\mathrm{IR}}
    =
    \arg\min_{g\in\mathcal{G}_{\mathrm{mono}}}
    \sum_{t}
    \left(
    g(\hat r_t)-\tilde p_t
    \right)^2,
\end{equation*}
where $\mathcal{G}_{\mathrm{mono}}$ denotes the set of non-decreasing
functions. The calibrated reward is then
\begin{equation*}
    \hat r_t^{\mathrm{IR}}
    =
    g_{\mathrm{IR}}(\hat r_t).
\end{equation*}
In practice, the mapping is fitted using the Pool Adjacent Violators Algorithm
(PAVA), resulting in a piecewise-constant monotonic calibration function.
Unlike temperature scaling, isotonic regression does not assume a specific
parametric form for the calibration mapping.

\textbf{Histogram Binning.}
Histogram binning (HB) is another non-parametric post-hoc calibration method.
It partitions the prediction interval $[0,1]$ into $B$ equal-width bins
$\{\mathcal{B}_b\}_{b=1}^{B}$. For each bin, the calibrated value is defined
as the average empirical success rate of the calibration samples whose original
PRM predictions fall into that bin:
\begin{equation*}
    \mu_b
    =
    \frac{1}{|\mathcal{I}_b|}
    \sum_{t\in\mathcal{I}_b}
    \tilde p_t,
    \qquad
    \mathcal{I}_b
    =
    \left\{
    t:\hat r_t\in\mathcal{B}_b
    \right\}.
\end{equation*}
For a test prefix whose original PRM prediction $\hat r_t$ falls into
$\mathcal{B}_b$, the calibrated process reward is
\begin{equation*}
    \hat r_t^{\mathrm{HB}}
    =
    \mu_b.
\end{equation*}
Thus, histogram binning replaces each raw prediction with the empirical success
rate observed in the corresponding confidence interval.

\section{Implementation Details}
\label{sec:app_details}
In this section, we present the implementation details of BayesianPRM. 

\textbf{Codebase.} BayesianPRM is implemented based on BetaPRM codebase\footnote{\textcolor{brown}{https://github.com/JinYuanLi0012/Beta-Binomial-PRM.git}}. We thank the authors of BetaPRM for releasing such a comprehensive training and evaluation pipeline.

\textbf{Model Architecture.} \textit{(1) Reward Ensemble}. The reward ensemble is built upon the InternVL3-8B backbone and an ensemble of \(M\) reward heads $\{g_m\}_{m=1}^M$. For each reasoning prefix \(c_t\), we extract the final-layer hidden representation \(z_t\in\mathbb{R}^{H_1}\) at the corresponding \texttt{<prm>} position. Each reward head is parameterized by a two-layer MLP with one hidden layer with dimension $H_2$. Instead of instantiating \(M\) separate modules, we stack the parameters of all ensemble members along an additional ensemble dimension, resulting in an ensemble architecture of $[M,H_1,H_2,1]$. We then apply a sigmoid activation to constrain the reward values to the range $[0,1]$. \textit{(2) Randomized Prior Network}. To improve the diversity of the reward ensemble, we associate the reward heads with an randomized prior network $\{f_m\}_{m=1}^M$. The prior network adopts the same MLP architecture of $[M,H_1,H_2,1]$ as the reward heads, while its parameters are randomly initialized and kept fixed. For the $m$-th ensemble member, the reward is given by $r_m(c_t)=\sigma(g_m(z_t)+\lambda_\text{prior}f_m(z_t))$. That is, the reward logit is jointly determined by the learned reward head and the corresponding prior network. Since different reward heads are equipped with distinct fixed prior functions, each member learns a residual correction on top of a different functional initialization. This discourages the ensemble members from collapsing to identical reward functions under the same supervision and thereby improves the diversity of the reward ensemble. \textit{(3) Belief Network}. The belief network is designed to estimate a prefix-dependent reliability distribution over the \(M\) reward hypotheses. For each reasoning prefix \(c_t\), we concatenate its hidden representation \(z_t\in\mathbb{R}^{H_1}\) with the reward predictions of all ensemble members, \(\bm{r}(c_t)=[r_1(c_t),\ldots,r_M(c_t)]^\top\), as the input to the belief network: $x_t=[z_t;\bm{r}(c_t)]\in \mathbb{R}^{H_1+M}$. The belief network $w_\nu(\cdot)$ is parameterized by a two-layer MLP with one hidden layer of dimension \(H_2\), resulting in an architecture of \([H_1+M,H_2,M]\). Specifically, the input $x_t$ is forwarded through \(w_v\), which outputs \(M\) belief logits. We then apply a softmax function over the ensemble dimension to obtain the reliability-aware belief $\bm{\alpha}^\text{rel}(c_t)=\mathrm{Softmax}(w_\nu(x_t))$. During belief-network training, the backbone, reward heads, and prior networks are kept frozen, and only the parameters of the belief network are optimized.

\textbf{Training Procedure.} The training procedure of BayesianPRM consists of two stages: reward ensemble training and belief network training. \textit{(1) Reward Ensemble Training.} In this stage, we freeze the vision encoder while fine-tuning the language model and all reward heads with $\mathcal{D}_\text{train}$. Each reward head is optimized using the standard cross-entropy loss:
\begin{equation}
\label{eq:full_bce}
        \mathcal{L}_\mathrm{CE}=-\frac{1}{M|\mathcal{D}_\text{train}|}\sum_{c_t\in\mathcal{D}_\text{train}}\sum_{m=1}^M\left[\hat{q}_t\log r_{m}(c_t)+(1-\hat{q}_t)\log(1-r_m(c_t))\right],
\end{equation}
where $\hat{q}_t$ is the Monte-Carlo label, $r_m(c_t)$ is the prediction of $m$-th reward head. \textit{(2) Belief Network Training}. During belief network training, we freeze the entire backbone and all reward heads, and update only the belief network $w_\nu$ on \(\mathcal{D}_{\text{val}}\) using the ELBO objective in Equation~\ref{eq:elbo}. After training, given a reasoning prefix \(c_t\), we obtain the original ensemble reward predictions $\bm{r}(c_t)$ together with the reliability-aware belief \(\boldsymbol{\alpha}^{\text{rel}}(c_t)\).

\textbf{Inference Procedure.} During inference, we first calibrate the reliability-aware belief with the conservatism-aware adjustment to obtain \(\boldsymbol{\alpha}^{\text{hyb}}(c_t)\), as defined in Equation~\ref{eq:hybrid_belief}. We then compute the final hybrid reward by taking a weighted sum of the reward ensemble predictions using \(\boldsymbol{\alpha}^{\text{hyb}}(c_t)\): $\tilde{r}(c_t)=[\bm{\alpha}^\text{hyb}(c_t)]^\top\bm{r}(c_t)$. For test-time scaling, we need to assign a single reward to each complete reasoning trajectory. Given a trajectory \(\tau\) consisting of \(T\) reasoning steps, we aggregate the process rewards by averaging them across all steps: $R(\tau)=\frac{1}{T}\sum_{t=1}^T \tilde{r}(c_t)$.

We present the pseudo-code of BayesianPRM in Algorithm~\ref{alg:pseudo_bayes}.

\begin{algorithm}[htbp]
\caption{BayesianPRM}
\label{alg:pseudo_bayes}
\begin{algorithmic}[1]
\STATE \textbf{Require:} Training dataset $\mathcal{D}_{\text{train}}=\{c_t,K_t,N_t\}_{t=1}^T$, validation dataset $\mathcal{D}_{\text{val}}=\{c_t,K_t,N_t\}_{t=1}^{T^\prime}$, LLM backbone
\STATE \textbf{Initialization:} ensemble size $M$, reward ensemble $\{r_{\theta_m}\}_{m=1}^M$, randomized prior network $\{f_{\phi_m}\}_{m=1}^M$, belief network $w_\nu$, reliability coefficient $\beta_1$, conservatism coefficient $\beta_2$, prior weighting coefficient $\lambda_\text{prior}$ 
\STATE \textbf{\textcolor{blue}{// Train the Reward Ensemble}}
\FOR{each gradient step}
\STATE Compute loss $        \mathcal{L}_\mathrm{CE}=-\mathbb{E}_{c_t\in\mathcal{D}_\text{train}}\frac{1}{M}\sum_{m=1}^M\left[\hat{q}_t\log r_{m}(c_t)+(1-\hat{q}_t)\log(1-r_m(c_t))\right]$
\STATE Update $\{r_{\theta_m}\}_{m=1}^M$ using $\mathcal{L}_\mathrm{CE}$
\ENDFOR
\STATE \textbf{\textcolor{blue}{// Train the Belief Network}}
\FOR{each gradient step}
    \STATE Compute loss $        \mathcal{L}_\mathrm{ELBO}=-\mathbb{E}_{c_t\in\mathcal{D}_\text{val}}\sum_{m=1}^M\alpha^\text{rel}_m(c_t)(K_t\log r_m(c_t)+(N_t-K_t)\log(1-r_m(c_t))$ $-\beta_1\log(M\alpha_m^\text{rel}(c_t)))$
    \STATE Update $w_\nu$ using $\mathcal{L}_\mathrm{ELBO}$
\ENDFOR
\STATE Obtain the reliability-aware belief $\bm{\alpha}^\text{rel}(c_t)$
\STATE \textbf{\textcolor{blue}{// Inference}}
\STATE Conservatively calibrate the reliability-aware belief, and obtain the hybrid belief ${\alpha_m^{\text{hyb}}(c_t)}
=
\frac{
\alpha_m^{\text{rel}}(c_t)
\exp\left(
-
r_m(c_t)/
\beta_2
\right)
}{
\sum_{j=1}^{M}
\alpha_j^{\text{rel}}(c_t)
\exp\left(
-
r_j(c_t)/
\beta_2
\right)
},\qquad m=1,2,...,M$
\STATE Obtain the hybrid reward by taking a weighted sum of the reward ensemble predictions with $\bm{\alpha}^\text{hyb}(c_t)$: $\tilde{r}(c_t)=[\bm{\alpha}^\text{hyb}(c_t)]^\top\bm{r}(c_t)$ 
\end{algorithmic}
\end{algorithm}

\textbf{Hyperparameter Setup.} We list the main hyperparameters of BayesianPRM in Table~\ref{tab:hyperparameters}.

\begin{table}[t]
\centering
\caption{Hyperparameters for BayesianPRM.}
\label{tab:hyperparameters}
\begin{tabular}{ll}
\toprule
\rowcolor{gray!10}
\textbf{Item} & \textbf{Value} \\
\midrule
Backbone & InternVL3-8B \\
Optimizer & AdamW~\citep{loshchilov2017decoupled} \\
Learning Rate & $1 \times 10^{-5}$ \\
Weight Decay & 0.05 \\
Learning Rate Scheduler & Cosine \\
Warmup Ratio & 0.05 \\
Epochs & 1 \\
Global Batch Size & 512 \\
Ensemble / Belief Data Split & 80\% / 20\% \\
Max Sequence Length & 8192 \\
Ensemble Size $M$ & $20$ \\
Backbone Hidden Dimension $H_1$ & 3584 \\
Reward Head / Belief Network Hidden Dimension $H_2$ & 256 \\
\midrule
Reliability Coefficient $\beta_1$ & $0.05$ \\
Conservatism Coefficient $\beta_2$ & $0.05$ \\
Prior Weighting Coefficient $\lambda_{\text{prior}}$ & $10$ \\
\bottomrule
\end{tabular}
\end{table}

\section{Additional Experimental Results}

In this section, we provide additional experimental results and analyses omitted from the main text due to space limits, offering a more comprehensive evaluation of BayesianPRM.

\subsection{Results with Qwen3VL-8B}

In Section~\ref{sec:accuracy} and Section~\ref{sec:tts_performance}, we use InternVL3-8B as the test-time rollout policy to evaluate PRM accuracy and test-time scaling performance. In this section, we replace it with Qwen3-VL-8B-Instruct~\citep{qwen3technicalreport} to examine the generalizability of BayesianPRM across different rollout backbones.

\subsubsection{Process Reward Accuracy Evaluation}
\label{sec:app_accuracy}

\begin{table*}[t]
\centering
\caption{
\textbf{PRM accuracy evaluation results for Qwen3VL-8B.} We evaluate Brier Score, Positive Brier Score, ECE, and AverageCE on four benchmarks: MathVision, the Vision Only subset of MathVerse (denoted as MathVerse-VO), MathVista, and MMStar. {\color{improvegray}$\downarrow$} indicates that lower values are better. $\pm$ denotes the standard deviation across three random training seeds.
}
\resizebox{\textwidth}{!}{
\begin{tabular}{lccccc}
\toprule

& \textbf{MathVision}
& \textbf{MathVerse-VO}
& \textbf{MathVista}
& \textbf{MMStar}
& \textbf{Average} \\
\midrule

\multicolumn{6}{c}{\textit{\textbf{Brier Score}}} \\
\midrule

Standard PRM
& 0.2540{\small\textcolor{improvegray}{$\pm 0.0128$}}
& 0.2695{\small\textcolor{improvegray}{$\pm 0.0210$}}
& 0.2223{\small\textcolor{improvegray}{$\pm 0.0143$}}
& 0.2506{\small\textcolor{improvegray}{$\pm 0.0088$}}
& 0.2491\\

BetaPRM
& 0.2854{\small\textcolor{improvegray}{$\pm 0.0259$}}
& 0.2901{\small\textcolor{improvegray}{$\pm 0.0220$}}
& 0.2267{\small\textcolor{improvegray}{$\pm 0.0166$}}
& 0.2540{\small\textcolor{improvegray}{$\pm 0.0192$}}
& 0.2641 \\

DreamPRM
& 0.2869{\small\textcolor{improvegray}{$\pm 0.0234$}}
& 0.3221{\small\textcolor{improvegray}{$\pm 0.0202$}}
& 0.2363{\small\textcolor{improvegray}{$\pm 0.0051$}}
& 0.2588{\small\textcolor{improvegray}{$\pm 0.0120$}}
& 0.2760\\

\rowcolor{lightpurple}
BayesianPRM
& \textbf{0.2002}{\small\textcolor{improvegray}{$\pm 0.0187$}}
& \textbf{0.2439}{\small\textcolor{improvegray}{$\pm 0.0261$}}
& \textbf{0.2118}{\small\textcolor{improvegray}{$\pm 0.0242$}}
& \textbf{0.2262}{\small\textcolor{improvegray}{$\pm 0.0105$}}
& \textbf{0.2205}\\

\midrule
\multicolumn{6}{c}{\textit{\textbf{Positive Brier Score}}} \\
\midrule

Standard PRM
& 0.2163{\small\textcolor{improvegray}{$\pm 0.0152$}}
& 0.2587{\small\textcolor{improvegray}{$\pm 0.0118$}}
& 0.1743{\small\textcolor{improvegray}{$\pm 0.0244$}}
& 0.2393{\small\textcolor{improvegray}{$\pm 0.0130$}}
& 0.2222 \\

BetaPRM
& 0.2569{\small\textcolor{improvegray}{$\pm 0.0226$}}
& 0.2413{\small\textcolor{improvegray}{$\pm 0.0174$}}
& 0.1890{\small\textcolor{improvegray}{$\pm 0.0125$}}
& 0.2448{\small\textcolor{improvegray}{$\pm 0.0096$}}
& 0.2330 \\

DreamPRM
& 0.2607{\small\textcolor{improvegray}{$\pm 0.0287$}}
& 0.3054{\small\textcolor{improvegray}{$\pm 0.0190$}}
& 0.1966{\small\textcolor{improvegray}{$\pm 0.0118$}}
& 0.2307{\small\textcolor{improvegray}{$\pm 0.0162$}}
& 0.2484\\

\rowcolor{lightpurple}
BayesianPRM
& \textbf{0.1393}{\small\textcolor{improvegray}{$\pm 0.0130$}}
& \textbf{0.1747}{\small\textcolor{improvegray}{$\pm 0.0231$}}
& \textbf{0.1410}{\small\textcolor{improvegray}{$\pm 0.0276$}}
& \textbf{0.2049}{\small\textcolor{improvegray}{$\pm 0.0133$}}
& \textbf{0.1650}\\

\midrule
\multicolumn{6}{c}{\textit{\textbf{ECE}}} \\
\midrule

Standard PRM
& 0.2809{\small\textcolor{improvegray}{$\pm 0.0236$}}
& 0.2231{\small\textcolor{improvegray}{$\pm 0.0204$}}
& 0.1628{\small\textcolor{improvegray}{$\pm 0.0181$}}
& 0.2364{\small\textcolor{improvegray}{$\pm 0.0042$}}
& 0.2258\\

BetaPRM
& 0.3371{\small\textcolor{improvegray}{$\pm 0.0177$}}
& 0.2590{\small\textcolor{improvegray}{$\pm 0.0262$}}
& 0.1689{\small\textcolor{improvegray}{$\pm 0.0213$}}
& 0.2368{\small\textcolor{improvegray}{$\pm 0.0160$}}
& 0.2505 \\

DreamPRM
& 0.3257{\small\textcolor{improvegray}{$\pm 0.0216$}}
& 0.2588{\small\textcolor{improvegray}{$\pm 0.0149$}}
& 0.1842{\small\textcolor{improvegray}{$\pm 0.0106$}}
& 0.2219{\small\textcolor{improvegray}{$\pm 0.0165$}}
& 0.2477\\

\rowcolor{lightpurple}
BayesianPRM
& \textbf{0.1763}{\small\textcolor{improvegray}{$\pm 0.0115$}}
& \textbf{0.1485}{\small\textcolor{improvegray}{$\pm 0.0231$}}
& \textbf{0.1181}{\small\textcolor{improvegray}{$\pm 0.0144$}}
& \textbf{0.1898}{\small\textcolor{improvegray}{$\pm 0.0139$}}
& \textbf{0.1582}\\
\midrule
\multicolumn{6}{c}{\textit{\textbf{AverageCE}}} \\
\midrule

Standard PRM
& 0.2746{\small\textcolor{improvegray}{$\pm 0.0227$}}
& 0.1857{\small\textcolor{improvegray}{$\pm 0.0128$}}
& 0.1421{\small\textcolor{improvegray}{$\pm 0.0147$}}
& 0.2076{\small\textcolor{improvegray}{$\pm 0.0122$}}
& 0.2025\\

BetaPRM
& 0.2676{\small\textcolor{improvegray}{$\pm 0.0164$}}
& 0.2242{\small\textcolor{improvegray}{$\pm 0.0142$}}
& 0.1715{\small\textcolor{improvegray}{$\pm 0.0233$}}
& 0.1600{\small\textcolor{improvegray}{$\pm 0.0169$}}
& 0.2058 \\

DreamPRM
& 0.2856{\small\textcolor{improvegray}{$\pm 0.0221$}}
& 0.2113{\small\textcolor{improvegray}{$\pm 0.0157$}}
& 0.1742{\small\textcolor{improvegray}{$\pm 0.0217$}}
& 0.1820{\small\textcolor{improvegray}{$\pm 0.0070$}}
& 0.2133\\

\rowcolor{lightpurple}
BayesianPRM
& \textbf{0.2047}{\small\textcolor{improvegray}{$\pm 0.0121$}}
& \textbf{0.1405}{\small\textcolor{improvegray}{$\pm 0.0229$}}
& \textbf{0.1064}{\small\textcolor{improvegray}{$\pm 0.0063$}}
& \textbf{0.1581}{\small\textcolor{improvegray}{$\pm 0.0147$}}
& \textbf{0.1524}\\
\bottomrule
\end{tabular}
}
\label{tab:app_calibration}
\end{table*}

\textbf{Datasets.} The training dataset is kept consistent with Section~\ref{sec:accuracy}, using VisualPRM400K-v1.1. For BayesianPRM, we randomly use 80\% of the data to train the reward ensemble and the remaining 20\% to train the belief network. For the evaluation dataset, we still use MathVision, MathVerse-VO, MathVista and MMStar. For each image-question pair, we first use Qwen3VL-8B-Instruct to generate a reasoning response, from which intermediate reasoning prefixes
are extracted. Starting from each prefix $c_t$, we perform 16 Monte-Carlo rollouts using Qwen3VL-8B-Instruct. Each rollout is then evaluated by Qwen2.5-32B-Instruct as the judge model to determine its
correctness. The proportion of correct rollouts is treated as the empirical success rate $\hat{p}_t$. The
$(c_t, \hat{p}_t)$ pairs constitute $\mathcal{D}_\text{test}$ for evaluation.

\textbf{Evaluation Metrics.} Following Section~\ref{sec:accuracy}, we use Brier Score, Positive Brier Score, Expected Calibration Error (ECE) and Average Calibration Error (AverageCE) as the evaluation metrics.

\textbf{Baselines.} Consistent with Section~\ref{sec:accuracy}, we choose Standard PRM, BetaPRM, and DreamPRM as baselines.

\textbf{Results.} As shown in Table~\ref{tab:app_calibration}, BayesianPRM consistently achieves the best PRM accuracy across all four benchmarks and all evaluation metrics when Qwen3-VL-8B-Instruct is used as the rollout policy. Compared with Standard PRM, BayesianPRM reduces the average Brier Score from 0.2491 to 0.2205 and the Positive Brier Score from 0.2222 to 0.1650. More substantial improvements are observed in calibration metrics, where the average ECE decreases from 0.2258 to 0.1582 and AverageCE from 0.2025 to 0.1524. BayesianPRM also consistently outperforms BetaPRM and DreamPRM across all metrics. These results further demonstrate that the improved reward estimation and calibration of BayesianPRM generalize to different rollout backbones.

\subsubsection{Test-Time Scaling Performance}

\begin{table*}[t]
\centering
\caption{
\textbf{PRM-guided test-time scaling performance comparison.} We compare Best-of-\(N\) (\(N=16\)) and Instance-Adaptive Scaling (IAS, \(N_{\max}=16\)) on MathVision, MathVerse-VO, MathVista and MMStar.  {\color{improvegray}$\uparrow$} indicates that higher values are better. $\pm$ denotes the standard deviation across three random training seeds.
}
\resizebox{\textwidth}{!}{
\begin{tabular}{lccccc}
\toprule

& \textbf{MathVision}
& \textbf{MathVerse-VO}
& \textbf{MathVista}
& \textbf{MMStar}
& \textbf{Average} \\
\midrule

Qwen3VL-8B
& 20.07
& 50.51
& 70.30
& 56.67
& 49.39 \\

\midrule
\multicolumn{6}{c}{\textit{\textbf{Best-of-$N$ ($N=16$)}}} \\
\midrule

Self-Consistency
& \textbf{22.04}
& 54.19
& 73.20
& 59.33
& 52.19\\

Standard PRM
& 20.86{\small\textcolor{improvegray}{$\pm 0.36$}}
& 53.52{\small\textcolor{improvegray}{$\pm 0.64$}}
& 74.97{\small\textcolor{improvegray}{$\pm 1.04$}}
& 59.61{\small\textcolor{improvegray}{$\pm 0.27$}}
& 52.24 \\

BetaPRM
& 21.14{\small\textcolor{improvegray}{$\pm 0.57$}}
& 53.39{\small\textcolor{improvegray}{$\pm 0.33$}}
& 74.79{\small\textcolor{improvegray}{$\pm 0.70$}}
& 60.07{\small\textcolor{improvegray}{$\pm 0.44$}}
& 52.35 \\

DreamPRM
& 21.06{\small\textcolor{improvegray}{$\pm 0.62$}}
& 53.71{\small\textcolor{improvegray}{$\pm 0.20$}}
& 74.15{\small\textcolor{improvegray}{$\pm 1.22$}}
& 60.27{\small\textcolor{improvegray}{$\pm 0.65$}}
& 52.30\\

\rowcolor{lightpurple}
BayesianPRM
& 21.52{\small\textcolor{improvegray}{$\pm 0.39$}}
& \textbf{54.29}{\small\textcolor{improvegray}{$\pm 0.56$}}
& \textbf{75.71}{\small\textcolor{improvegray}{$\pm 0.78$}}
& \textbf{60.69}{\small\textcolor{improvegray}{$\pm 0.53$}}
& \textbf{53.05}\\

\midrule
\multicolumn{6}{c}{\textit{\textbf{IAS ($N_{\mathrm{max}}=16$)}}} \\
\midrule

Standard PRM
& 20.73{\small\textcolor{improvegray}{$\pm 0.46$}}
& 52.73{\small\textcolor{improvegray}{$\pm 0.49$}}
& 73.81{\small\textcolor{improvegray}{$\pm 0.81$}}
& 59.40{\small\textcolor{improvegray}{$\pm 0.32$}}
& 51.67\\

BetaPRM
& 21.08{\small\textcolor{improvegray}{$\pm 0.71$}}
& 52.63{\small\textcolor{improvegray}{$\pm 0.30$}}
& 73.86{\small\textcolor{improvegray}{$\pm 0.68$}}
& 59.35{\small\textcolor{improvegray}{$\pm 0.43$}}
& 51.73\\

DreamPRM
& 20.89{\small\textcolor{improvegray}{$\pm 1.36$}}
& 53.11{\small\textcolor{improvegray}{$\pm 0.48$}}
& 74.06{\small\textcolor{improvegray}{$\pm 0.74$}}
& 59.28{\small\textcolor{improvegray}{$\pm 0.56$}}
& 51.84\\

\rowcolor{lightpurple}
BayesianPRM
& \textbf{21.46}{\small\textcolor{improvegray}{$\pm 0.72$}}
& \textbf{54.07}{\small\textcolor{improvegray}{$\pm 0.45$}}
& \textbf{74.65}{\small\textcolor{improvegray}{$\pm 0.77$}}
& \textbf{60.20}{\small\textcolor{improvegray}{$\pm 0.48$}}
& \textbf{52.60}\\
\bottomrule
\end{tabular}
}
\label{tab:app_tts}
\end{table*}

\textbf{Experimental Setup.} The training dataset $\mathcal{D}_\text{train}$ and evaluation datasets $\mathcal{D}_\text{test}$ are kept consistent
with those used in Section~\ref{sec:app_accuracy}. We compare the performance of Best-of-$N$ and IAS. For Best-of-$N$, we set $N = 16$. For each $c_t\in\mathcal{D}_\text{test}$, Qwen3VL-8B-Instruct
generates 16 candidate responses, which are scored by different PRMs, and the response with the highest
score is selected. For IAS, we set $N_\text{max} = 16$ and dynamically allocate the rollout budget; We use Standard PRM, BetaPRM, and DreamPRM
as baselines. For Best-of-$N$ , we add an additional Self-Consistency baseline.

\textbf{Results.} As shown in Table~\ref{tab:app_tts}, BayesianPRM consistently improves PRM-guided test-time scaling with Qwen3-VL-8B-Instruct as the rollout policy. Under Best-of-\(N\) with \(N=16\), BayesianPRM achieves the highest average accuracy of 53.05, outperforming Standard PRM and the second-best PRM baseline, BetaPRM, by 0.81 and 0.70 points, respectively. It achieves the best performance on MathVerse-VO, MathVista, and MMStar, while ranking second on MathVision. Under IAS with \(N_{\max}=16\), BayesianPRM achieves the best performance on all four benchmarks and the highest average accuracy of 52.60, exceeding Standard PRM and the second-best method, DreamPRM, by 0.93 and 0.76 points, respectively. These results show that the test-time scaling improvements of BayesianPRM remain consistent when using different rollout backbones.

\subsection{Evaluation on VisualProcessBench}

In this part, we further evaluate BayesianPRM on VisualProcessBench~\citep{wang2026visualprm400k}, a benchmark designed to evaluate the ability of PRMs to identify erroneous steps in multimodal reasoning. It contains 2,866 samples and 26,950 human-annotated step-level correctness labels, covering five source benchmarks: MMMU~\citep{yue2024mmmu}, MathVision, MathVerse, DynaMath~\citep{zou2025dynamath}, and WeMath~\citep{qiao2025we}. Each reasoning step is annotated as correct, incorrect, or neutral. Following prior work, we exclude neutral steps from evaluation. We report macro-F1 for each source benchmark and compute the overall score as micro-F1 over all non-neutral annotated steps pooled across sources. Standard PRM, BetaPRM, DreamPRM, and BayesianPRM are evaluated on VisualProcessBench using models trained on VisualPRM400K-v1.1. This evaluation directly measures the step-level error detection ability of different PRMs, providing a complementary evaluation to the calibration and test-time scaling results. We use a fixed global classification threshold of 0.8 for all PRMs. Steps with scores above 0.8 are classified as correct, and the same threshold is applied uniformly across all five source benchmarks.

\textbf{Results.} As shown in Table~\ref{tab:processbench}, BayesianPRM achieves the highest overall score of 62.23 on VisualProcessBench, outperforming Standard PRM and the second-best method, BetaPRM, by 2.44 and 1.40 points, respectively. BayesianPRM achieves the best performance on four of the five subsets, with particularly clear improvements on MathVision, where it exceeds the second-best method by 3.45 points. On DynaMath, BayesianPRM is only 0.21 points lower than BetaPRM. These results show that BayesianPRM provides more accurate step-level reward predictions across diverse multimodal reasoning tasks.

\begin{table*}[t]
\centering
\caption{
\textbf{Evaluation results on VisualProcessBench.} Each source column reports macro-F1 on the corresponding subset, while Overall denotes the micro-F1 computed over all non-neutral annotated steps. $\pm$ denotes the standard deviation across three random training seeds.
}
\resizebox{\textwidth}{!}{
\begin{tabular}{lcccccc}
\toprule

\textbf{Model}
& \textbf{Overall}
& \textbf{MathVision}
& \textbf{MathVerse}
& \textbf{MMMU}
& \textbf{DynaMath}
& \textbf{WeMath}
\\
\midrule

Standard PRM
& 59.79  {\small\textcolor{improvegray}{$\pm 0.41$}}
& 57.03{\small\textcolor{improvegray}{$\pm 0.16$}}
& 59.65{\small\textcolor{improvegray}{$\pm 0.26$}}
& 62.23{\small\textcolor{improvegray}{$\pm 0.42$}}
& 59.71{\small\textcolor{improvegray}{$\pm 0.24$}}
& 63.13{\small\textcolor{improvegray}{$\pm 0.60$}}
\\

BetaPRM
& 60.83{\small\textcolor{improvegray}{$\pm 0.23$}}
& 57.83{\small\textcolor{improvegray}{$\pm 0.38$}}
& 60.40{\small\textcolor{improvegray}{$\pm 0.44$}}
& 60.59{\small\textcolor{improvegray}{$\pm 0.26$}}
& \textbf{61.33}{\small\textcolor{improvegray}{$\pm 0.62$}}
& 63.47{\small\textcolor{improvegray}{$\pm 0.48$}}
\\

DreamPRM
& 60.69{\small\textcolor{improvegray}{$\pm 0.32$}}
& 58.13{\small\textcolor{improvegray}{$\pm 0.47$}}
& 59.44{\small\textcolor{improvegray}{$\pm 0.29$}}
& 61.22{\small\textcolor{improvegray}{$\pm 0.54$}}
& 61.06{\small\textcolor{improvegray}{$\pm 0.15$}}
& 63.59{\small\textcolor{improvegray}{$\pm 0.27$}}
\\

\rowcolor{lightpurple}
BayesianPRM
& \textbf{62.23}{\small\textcolor{improvegray}{$\pm 0.29$}}
& \textbf{61.58}{\small\textcolor{improvegray}{$\pm 0.54$}}
& \textbf{61.31}{\small\textcolor{improvegray}{$\pm 0.31$}}
& \textbf{63.39}{\small\textcolor{improvegray}{$\pm 0.36$}}
& 61.12{\small\textcolor{improvegray}{$\pm 0.47$}}
& \textbf{64.06}{\small\textcolor{improvegray}{$\pm 0.22$}}
\\
\bottomrule
\end{tabular}
}
\label{tab:processbench}
\end{table*}

\subsection{Evaluation with Calibration Sets}
\label{sec:app_calibration_sets}

In this section, we compare BayesianPRM with additional baselines, including QR-PRM and post-hoc calibration methods including Temperature Scaling, Isotonic Regression, and Histogram Binning, as described in Appendix~\ref{sec:app_baselines}. Since these methods require additional calibration sets, we exclude them from the main-text comparison to ensure a fair evaluation. We also exclude OT-PRM from the baselines because its official implementation has not been publicly released.

\textbf{Experimental Setup.} We randomly split VisualPRM400K-v1.1 into an 80\% training set and a disjoint 20\% calibration set.
The Monte-Carlo labels for the calibration set are obtained using InternVL3-8B with 16 rollouts per sample.
For BayesianPRM, we train the reward ensemble on the 80\% training split and train the belief network on the held-out 20\% calibration split.
For QR-PRM, we initialize from a Standard PRM trained only on the same 80\% training split and fine-tune it on the held-out calibration split following the procedure described in Appendix~\ref{sec:app_baselines}.
For Temperature Scaling, Isotonic Regression, and Histogram Binning, we use the same 80\%-trained Standard PRM as the backbone and fit the calibration mappings exclusively on the held-out calibration split.
All PRMs use InternVL3-8B as the LLM backbone.
Following the setup in Section~\ref{sec:experiments}, we evaluate these methods in terms of both process reward accuracy and test-time scaling performance.
The 80\%-trained Standard PRM is used only for the calibration comparison in this section, while the Standard PRM reported in the main experiments is trained on the full VisualPRM400K-v1.1 dataset.

\textbf{Process Reward Accuracy Evaluation.} As shown in Table~\ref{tab:app_calibration_qr}, BayesianPRM achieves the best results on every benchmark under all four evaluation metrics. Compared with QR-PRM, BayesianPRM reduces the average Brier Score from 0.1668 to 0.1231 and the Positive Brier Score from 0.1508 to 0.1095. Similar improvements are observed in calibration metrics, with the ECE decreasing from 0.1869 to 0.1264 and AverageCE from 0.1728 to 0.1267. In contrast, the three post-hoc calibration methods, including Temperature Scaling, Isotonic Regression, and Histogram Binning, show limited effectiveness in this setting and consistently underperform QR-PRM and BayesianPRM across most metrics. These results show that BayesianPRM remains effective when compared with methods that explicitly leverage an additional calibration set.

\begin{table*}[t]
\centering
\caption{
\textbf{PRM accuracy evaluation results with calibration sets.} We evaluate Brier Score, Positive Brier Score, ECE, and AverageCE on four benchmarks: MathVision, the Vision Only subset of MathVerse (denoted as MathVerse-VO), MathVista, and MMStar. TS, IR, HB denote Temperature Scaling, Isotonic Regression, Histogram Binning, respectively. {\color{improvegray}$\downarrow$} indicates that lower values are better. $\pm$ denotes the standard deviation across three random training seeds.
}
\resizebox{\textwidth}{!}{
\begin{tabular}{lccccc}
\toprule

& \textbf{MathVision}
& \textbf{MathVerse-VO}
& \textbf{MathVista}
& \textbf{MMStar}
& \textbf{Average} \\
\midrule

\multicolumn{6}{c}{\textit{\textbf{Brier Score}} {\color{improvegray}$\downarrow$}} \\
\midrule

QR-PRM
& 0.1634{\small\textcolor{improvegray}{$\pm 0.0166$}}
& 0.1899{\small\textcolor{improvegray}{$\pm 0.0147$}}
& 0.1343{\small\textcolor{improvegray}{$\pm 0.0124$}}
& 0.1795{\small\textcolor{improvegray}{$\pm 0.0262$}}
& 0.1668\\

TS
& 0.2931{\small\textcolor{improvegray}{$\pm 0.0216$}}
& 0.2652{\small\textcolor{improvegray}{$\pm 0.0124$}}
& 0.2145{\small\textcolor{improvegray}{$\pm 0.0189$}}
& 0.2667{\small\textcolor{improvegray}{$\pm 0.0160$}}
& 0.2599\\

IR
& 0.2568{\small\textcolor{improvegray}{$\pm 0.0357$}}
& 0.2420{\small\textcolor{improvegray}{$\pm 0.0324$}}
& 0.2001{\small\textcolor{improvegray}{$\pm 0.0164$}}
& 0.2504{\small\textcolor{improvegray}{$\pm 0.0293$}}
& 0.2373\\

HB
& 0.3146{\small\textcolor{improvegray}{$\pm 0.0250$}}
& 0.2847{\small\textcolor{improvegray}{$\pm 0.0127$}}
& 0.2394{\small\textcolor{improvegray}{$\pm 0.0165$}}
& 0.2781{\small\textcolor{improvegray}{$\pm 0.0277$}}
& 0.2792\\

\rowcolor{lightpurple}
BayesianPRM
& \textbf{0.1035}{\small\textcolor{improvegray}{$\pm 0.0142$}}
& \textbf{0.1422}{\small\textcolor{improvegray}{$\pm 0.0169$}}
& \textbf{0.0946}{\small\textcolor{improvegray}{$\pm 0.0125$}}
& \textbf{0.1521}{\small\textcolor{improvegray}{$\pm 0.0178$}}
& \textbf{0.1231}\\

\midrule
\multicolumn{6}{c}{\textit{\textbf{Positive Brier Score}}
{\color{improvegray}$\downarrow$}} \\
\midrule

QR-PRM
& 0.1548{\small\textcolor{improvegray}{$\pm 0.0247$}}
& 0.1697{\small\textcolor{improvegray}{$\pm 0.0206$}}
& 0.1182{\small\textcolor{improvegray}{$\pm 0.0122$}}
& 0.1603{\small\textcolor{improvegray}{$\pm 0.0281$}}
& 0.1508\\

TS
& 0.2855{\small\textcolor{improvegray}{$\pm 0.0157$}}
& 0.2437{\small\textcolor{improvegray}{$\pm 0.0231$}}
& 0.1964{\small\textcolor{improvegray}{$\pm 0.0152$}}
& 0.2503{\small\textcolor{improvegray}{$\pm 0.0204$}}
& 0.2440\\

IR
& 0.2416{\small\textcolor{improvegray}{$\pm 0.0214$}}
& 0.2309{\small\textcolor{improvegray}{$\pm 0.0195$}}
& 0.1874{\small\textcolor{improvegray}{$\pm 0.0267$}}
& 0.2377{\small\textcolor{improvegray}{$\pm 0.0347$}}
& 0.2244\\

HB
& 0.2988{\small\textcolor{improvegray}{$\pm 0.0273$}}
& 0.2713{\small\textcolor{improvegray}{$\pm 0.0215$}}
& 0.2106{\small\textcolor{improvegray}{$\pm 0.0155$}}
& 0.2622{\small\textcolor{improvegray}{$\pm 0.0303$}}
& 0.2607\\

\rowcolor{lightpurple}
BayesianPRM
& \textbf{0.0914}{\small\textcolor{improvegray}{$\pm 0.0065$}}
& \textbf{0.1337}{\small\textcolor{improvegray}{$\pm 0.0143$}}
& \textbf{0.0735}{\small\textcolor{improvegray}{$\pm 0.0157$}}
& \textbf{0.1394}{\small\textcolor{improvegray}{$\pm 0.0166$}}
& \textbf{0.1095}\\

\midrule
\multicolumn{6}{c}{\textit{\textbf{ECE}}
{\color{improvegray}$\downarrow$}} \\
\midrule

QR-PRM
& 0.2083{\small\textcolor{improvegray}{$\pm 0.0255$}}
& 0.2105{\small\textcolor{improvegray}{$\pm 0.0136$}}
& 0.1442{\small\textcolor{improvegray}{$\pm 0.0246$}}
& 0.1846{\small\textcolor{improvegray}{$\pm 0.0309$}}
& 0.1869\\

TS
& 0.3357{\small\textcolor{improvegray}{$\pm 0.0260$}}
& 0.3102{\small\textcolor{improvegray}{$\pm 0.0391$}}
& 0.2451{\small\textcolor{improvegray}{$\pm 0.0241$}}
& 0.2683{\small\textcolor{improvegray}{$\pm 0.0272$}}
& 0.2898\\

IR
& 0.2943{\small\textcolor{improvegray}{$\pm 0.0323$}}
& 0.3127{\small\textcolor{improvegray}{$\pm 0.0224$}}
& 0.2670{\small\textcolor{improvegray}{$\pm 0.0149$}}
& 0.2463{\small\textcolor{improvegray}{$\pm 0.0266$}}
& 0.2801\\

HB
& 0.3427{\small\textcolor{improvegray}{$\pm 0.0266$}}
& 0.2896{\small\textcolor{improvegray}{$\pm 0.0199$}}
& 0.2613{\small\textcolor{improvegray}{$\pm 0.0163$}}
& 0.2951{\small\textcolor{improvegray}{$\pm 0.0295$}}
& 0.2972\\

\rowcolor{lightpurple}
BayesianPRM
& \textbf{0.1602}{\small\textcolor{improvegray}{$\pm 0.0131$}}
& \textbf{0.1578}{\small\textcolor{improvegray}{$\pm 0.0182$}}
& \textbf{0.0528}{\small\textcolor{improvegray}{$\pm 0.0095$}}
& \textbf{0.1349}{\small\textcolor{improvegray}{$\pm 0.0163$}}
& \textbf{0.1264}\\
\midrule
\multicolumn{6}{c}{\textit{\textbf{AverageCE}}
{\color{improvegray}$\downarrow$}} \\
\midrule

QR-PRM
& 0.2088{\small\textcolor{improvegray}{$\pm 0.0249$}}
& 0.1867{\small\textcolor{improvegray}{$\pm 0.0280$}}
& 0.1261{\small\textcolor{improvegray}{$\pm 0.0156$}}
& 0.1694{\small\textcolor{improvegray}{$\pm 0.0142$}}
& 0.1728\\

TS
& 0.3021{\small\textcolor{improvegray}{$\pm 0.0141$}}
& 0.2752{\small\textcolor{improvegray}{$\pm 0.0256$}}
& 0.2161{\small\textcolor{improvegray}{$\pm 0.0340$}}
& 0.2430{\small\textcolor{improvegray}{$\pm 0.0244$}}
& 0.2591 \\

IR
& 0.2710{\small\textcolor{improvegray}{$\pm 0.0130$}}
& 0.2899{\small\textcolor{improvegray}{$\pm 0.0291$}}
& 0.2351{\small\textcolor{improvegray}{$\pm 0.0201$}}
& 0.2244{\small\textcolor{improvegray}{$\pm 0.0107$}}
& 0.2551\\

HB
& 0.3281{\small\textcolor{improvegray}{$\pm 0.0142$}}
& 0.2640{\small\textcolor{improvegray}{$\pm 0.0190$}}
& 0.2388{\small\textcolor{improvegray}{$\pm 0.0203$}}
& 0.2542{\small\textcolor{improvegray}{$\pm 0.0082$}}
& 0.2713\\

\rowcolor{lightpurple}
BayesianPRM
& \textbf{0.1720}{\small\textcolor{improvegray}{$\pm 0.0135$}}
& \textbf{0.1453}{\small\textcolor{improvegray}{$\pm 0.0146$}}
& \textbf{0.0792}{\small\textcolor{improvegray}{$\pm 0.0042$}}
& \textbf{0.1103}{\small\textcolor{improvegray}{$\pm 0.0060$}}
& \textbf{0.1267}\\
\bottomrule
\end{tabular}
}
\label{tab:app_calibration_qr}
\end{table*}

\textbf{Test-Time Scaling Performance.} As shown in Table~\ref{tab:tts_app}, BayesianPRM achieves the best overall test-time scaling performance under both fixed-budget Best-of-\(N\) and instance-adaptive scaling. With \(N=16\), BayesianPRM obtains the highest average accuracy of 56.75, outperforming QR-PRM by 0.74 points, and achieves the best performance on all four benchmarks. Under IAS with \(N_{\max}=16\), BayesianPRM again achieves the highest average accuracy of 56.14, exceeding QR-PRM by 0.64 points. It performs best on three of the four benchmarks and is only slightly lower than QR-PRM on MMStar. In contrast, the post-hoc calibration methods, including Temperature Scaling, Isotonic Regression, and Histogram Binning, show relatively weaker test-time scaling performance. These results further demonstrate the effectiveness of BayesianPRM even when compared with methods that leverage an additional calibration set.

\begin{table*}[t]
\centering
\caption{
\textbf{PRM-guided test-time scaling performance comparison with calibration sets.} We compare Best-of-\(N\) (\(N=16\)) and Instance-Adaptive Scaling (IAS, \(N_{\max}=16\)) on MathVision, MathVerse-VO, MathVista and MMStar.  {\color{improvegray}$\uparrow$} indicates that higher values are better. $\pm$ denotes the standard deviation across three random training seeds.
}
\resizebox{\textwidth}{!}{
\begin{tabular}{lccccc}
\toprule

& \textbf{MathVision}
& \textbf{MathVerse-VO}
& \textbf{MathVista}
& \textbf{MMStar}
& \textbf{Average} \\

\midrule
\multicolumn{6}{c}{\textit{\textbf{Best-of-$N$ ($N=16$)}}{\color{improvegray}$\uparrow$}} \\
\midrule

QR-PRM
& 34.79{\small\textcolor{improvegray}{$\pm 0.67$}}
& 51.26{\small\textcolor{improvegray}{$\pm 0.84$}}
& 69.28{\small\textcolor{improvegray}{$\pm 0.73$}}
& 68.71{\small\textcolor{improvegray}{$\pm 0.40$}}
& 56.01 \\

TS
& 32.10{\small\textcolor{improvegray}{$\pm 1.18$}}
& 49.66{\small\textcolor{improvegray}{$\pm 0.64$}}
& 69.12{\small\textcolor{improvegray}{$\pm 0.48$}}
& 67.01{\small\textcolor{improvegray}{$\pm 0.39$}}
& 54.47\\

IR
& 32.67{\small\textcolor{improvegray}{$\pm 1.14$}}
& 51.26{\small\textcolor{improvegray}{$\pm 0.79$}}
& 68.83{\small\textcolor{improvegray}{$\pm 0.85$}}
& 67.58{\small\textcolor{improvegray}{$\pm 0.30$}}
& 55.09\\

HB
& 32.00{\small\textcolor{improvegray}{$\pm 1.01$}}
& 49.13{\small\textcolor{improvegray}{$\pm 0.57$}}
& 69.06{\small\textcolor{improvegray}{$\pm 0.82$}}
& 67.24{\small\textcolor{improvegray}{$\pm 1.13$}}
& 54.36\\

\rowcolor{lightpurple}
BayesianPRM
& \textbf{35.80}{\small\textcolor{improvegray}{$\pm 0.56$}}
& \textbf{51.93}{\small\textcolor{improvegray}{$\pm 0.77$}}
& \textbf{69.56}{\small\textcolor{improvegray}{$\pm 0.33$}}
& \textbf{69.69}{\small\textcolor{improvegray}{$\pm 0.90$}}
& \textbf{56.75}\\

\midrule
\multicolumn{6}{c}{
    \textbf{\textit{IAS ($N_{\max}=16$)}}
    {\color{improvegray}{$\uparrow$}}
} \\
\midrule

QR-PRM
& 34.25{\small\textcolor{improvegray}{$\pm 0.83$}}
& 50.86{\small\textcolor{improvegray}{$\pm 0.45$}}
& 68.24{\small\textcolor{improvegray}{$\pm 0.36$}}
& \textbf{68.66}{\small\textcolor{improvegray}{$\pm 0.69$}}
& 55.50\\

TS
& 32.01{\small\textcolor{improvegray}{$\pm 1.20$}}
& 49.12{\small\textcolor{improvegray}{$\pm 0.93$}}
& 68.74{\small\textcolor{improvegray}{$\pm 0.65$}}
& 67.35{\small\textcolor{improvegray}{$\pm 0.46$}}
& 54.31\\

IR
& 32.33{\small\textcolor{improvegray}{$\pm 0.96$}}
& 51.19{\small\textcolor{improvegray}{$\pm 0.42$}}
& 68.41{\small\textcolor{improvegray}{$\pm 0.79$}}
& 67.20{\small\textcolor{improvegray}{$\pm 0.63$}}
& 54.78\\

HB
& 31.84{\small\textcolor{improvegray}{$\pm 0.35$}}
& 49.06{\small\textcolor{improvegray}{$\pm 1.33$}}
& 68.51{\small\textcolor{improvegray}{$\pm 0.58$}}
& 67.19{\small\textcolor{improvegray}{$\pm 0.75$}}
& 54.15\\

\rowcolor{lightpurple}
BayesianPRM
& \textbf{35.33}{\small\textcolor{improvegray}{$\pm 0.46$}}
& \textbf{51.61}{\small\textcolor{improvegray}{$\pm 0.74$}}
& \textbf{69.34}{\small\textcolor{improvegray}{$\pm 0.71$}}
& 68.29{\small\textcolor{improvegray}{$\pm 0.34$}}
& \textbf{56.14}\\
\bottomrule
\end{tabular}
}
\label{tab:tts_app}
\end{table*}

\subsection{Extended Ablation Study}
\label{sec:app_ablation}
In this section, we provide additional ablation study results omitted from the main text due to space limits.

\textbf{Comparison with Vanilla Ensemble PRM.} We construct a vanilla ensemble PRM by uniformly averaging the predictions of all ensemble members, without performing belief updating or calibration. We compare its test-time scaling performance with BayesianPRM, with the results reported in Table~\ref{tab:ablation_vanilla}. It is clear that BayesianPRM consistently outperforms vanilla ensemble PRM, demonstrating the importance of reliability-aware belief updating and conservatism-aware belief calibration.

\begin{table*}[t]
\centering
\caption{
\textbf{Comparison with Vanilla Ensemble PRM}. We compare Best-of-\(N\) (\(N=16\)) on MathVision, MathVerse-VO, MathVista and MMStar. $\pm$ denotes the standard deviation across three random training seeds.
}
\resizebox{\textwidth}{!}{
\begin{tabular}{lccccc}
\toprule

& \textbf{MathVision}
& \textbf{MathVerse-VO}
& \textbf{MathVista}
& \textbf{MMStar}
& \textbf{Average} \\

\midrule

Vanilla Ensemble PRM
& 34.88{\small\textcolor{improvegray}{$\pm 0.47$}}
& 50.21{\small\textcolor{improvegray}{$\pm 0.92$}}
& 68.26{\small\textcolor{improvegray}{$\pm 0.59$}}
& 68.03{\small\textcolor{improvegray}{$\pm 0.88$}}
& 55.35 \\

\rowcolor{lightpurple}
BayesianPRM
& \textbf{35.86}{\small\textcolor{improvegray}{$\pm 0.70$}}
& \textbf{51.40}{\small\textcolor{improvegray}{$\pm 1.11$}}
& \textbf{69.40}{\small\textcolor{improvegray}{$\pm 0.53$}}
& \textbf{69.33}{\small\textcolor{improvegray}{$\pm 0.60$}}
& \textbf{56.50}\\

\bottomrule
\end{tabular}
}
\label{tab:ablation_vanilla}
\end{table*}

\textbf{Effect of $\beta_1$.} $\beta_1$ controls the strength of reliability-aware belief updating, with a smaller \(\beta_1\) inducing a stronger update. If \(\beta_1\) is too large, the belief distribution remains overly close to the uniform prior. In contrast, an overly small \(\beta_1\) produces an excessively sharp belief distribution, which may overfit to label noise. We vary \(\beta_1\) over \(\{0.001, 0.01, 0.05, 0.1\}\) and report the test-time scaling performance in Table~\ref{tab:ablation_app_beta1}. As shown, the performance drops when \(\beta_1\) is set to either 0.001 or 0.1. We also plot in Figure~\ref{fig:weight} how the reliability-aware belief over reward heads evolves during training for different values of \(\beta_1\), with the reward ensemble size fixed to 5. When \(\beta_1=0.001\) or \(0.01\), Head 2 tends to dominate while the weights of Head 4 and Head 5 approach zero, indicating an overly concentrated belief distribution. In contrast, when \(\beta_1=0.1\), the belief remains too evenly distributed across the heads. Therefore, we set $\beta_1=0.05$ in our experiments.

\begin{figure}
    \centering
    \includegraphics[width=0.98\linewidth]{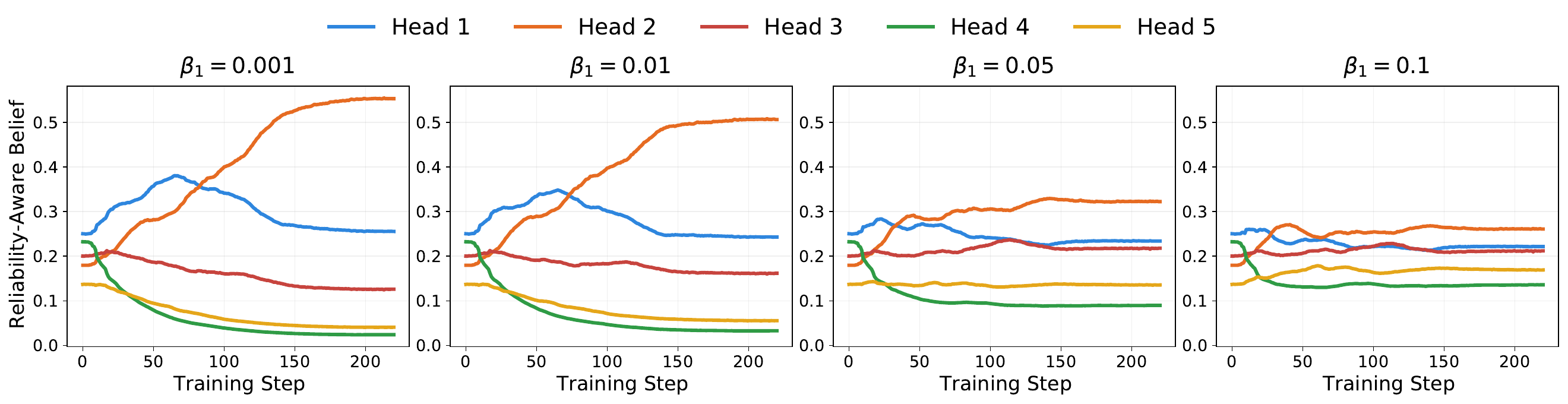}
    \caption{Reliability-aware belief distribution during training. We use $M=5$ for visualization clarity.}
    \label{fig:weight}
\end{figure}

\begin{table*}[t]
\centering
\caption{
\textbf{Ablation Study on $\beta_1$}. We compare Best-of-\(N\) (\(N=16\)) on MathVision, MathVerse-VO, MathVista and MMStar. $\pm$ denotes the standard deviation across three random training seeds.
}
\resizebox{\textwidth}{!}{
\begin{tabular}{lccccc}
\toprule

& \textbf{MathVision}
& \textbf{MathVerse-VO}
& \textbf{MathVista}
& \textbf{MMStar}
& \textbf{Average} \\

\midrule

$\beta_1=0.001$
& 34.28{\small\textcolor{improvegray}{$\pm 0.46$}}
& 50.30{\small\textcolor{improvegray}{$\pm 0.88$}}
& 68.33{\small\textcolor{improvegray}{$\pm 0.67$}}
& 68.56{\small\textcolor{improvegray}{$\pm 0.83$}}
& 55.37 \\

$\beta_1=0.01$
& 35.67{\small\textcolor{improvegray}{$\pm 0.78$}}
& 51.86{\small\textcolor{improvegray}{$\pm 1.27$}}
& 69.32{\small\textcolor{improvegray}{$\pm 0.42$}}
& 69.07{\small\textcolor{improvegray}{$\pm 0.57$}}
& 56.48\\

$\beta_1=0.05$
& 35.86{\small\textcolor{improvegray}{$\pm 0.70$}}
& 51.40{\small\textcolor{improvegray}{$\pm 1.11$}}
& 69.40{\small\textcolor{improvegray}{$\pm 0.53$}}
& 69.33{\small\textcolor{improvegray}{$\pm 0.60$}}
& 56.50\\

$\beta_1=0.1$
& 35.21{\small\textcolor{improvegray}{$\pm 0.40$}}
& 50.35{\small\textcolor{improvegray}{$\pm 0.63$}}
& 68.82{\small\textcolor{improvegray}{$\pm 0.27$}}
& 68.19{\small\textcolor{improvegray}{$\pm 0.64$}}
& 55.64\\

\bottomrule
\end{tabular}
}
\label{tab:ablation_app_beta1}
\end{table*}

\begin{table*}[t]
\centering
\caption{
\textbf{Ablation Study on $\beta_2$}. We compare Best-of-\(N\) (\(N=16\)) on MathVision, MathVerse-VO, MathVista and MMStar. $\pm$ denotes the standard deviation across three random training seeds.
}
\resizebox{\textwidth}{!}{
\begin{tabular}{lccccc}
\toprule

& \textbf{MathVision}
& \textbf{MathVerse-VO}
& \textbf{MathVista}
& \textbf{MMStar}
& \textbf{Average} \\

\midrule

$\beta_2=0.001$
& 35.24{\small\textcolor{improvegray}{$\pm 0.62$}}
& 50.62{\small\textcolor{improvegray}{$\pm 0.94$}}
& 68.43{\small\textcolor{improvegray}{$\pm 0.46$}}
& 69.10{\small\textcolor{improvegray}{$\pm 0.82$}}
& 55.85 \\

$\beta_2=0.01$
& 35.78{\small\textcolor{improvegray}{$\pm 0.76$}}
& 50.54{\small\textcolor{improvegray}{$\pm 0.85$}}
& 69.21{\small\textcolor{improvegray}{$\pm 0.50$}}
& 69.39{\small\textcolor{improvegray}{$\pm 0.74$}}
& 56.23\\

$\beta_2=0.05$
& 35.86{\small\textcolor{improvegray}{$\pm 0.70$}}
& 51.40{\small\textcolor{improvegray}{$\pm 1.11$}}
& 69.40{\small\textcolor{improvegray}{$\pm 0.53$}}
& 69.33{\small\textcolor{improvegray}{$\pm 0.60$}}
& 56.50\\

$\beta_2=0.1$
& 34.72{\small\textcolor{improvegray}{$\pm 0.49$}}
& 50.26{\small\textcolor{improvegray}{$\pm 1.26$}}
& 69.27{\small\textcolor{improvegray}{$\pm 0.61$}}
& 69.98{\small\textcolor{improvegray}{$\pm 0.85$}}
& 56.06\\

\bottomrule
\end{tabular}
}
\label{tab:ablation_app_beta2}
\end{table*}

\textbf{Effect of $\beta_2$.} $\beta_2$ controls the strength of conservatism-aware belief calibration, with a smaller \(\beta_2\) inducing stronger conservative reweighting. If \(\beta_2\) is too large, reward overestimation may not be suppressed, whereas an overly small \(\beta_2\) may instead lead to reward underestimation. We vary \(\beta_2\) over \(\{0.001, 0.01, 0.05, 0.1\}\), and Table~\ref{tab:ablation_app_beta2} reports the test-time scaling performance comparison. As shown, performance degrades when \(\beta_2\) is either too large or too small. We also observe that larger \(\beta_2\) values are more favorable on MMStar, suggesting that the degree of reward overestimation varies across datasets. Therefore, a single \(\beta_2\) may not be optimal for all benchmarks. In our experiments, we uniformly set \(\beta_2=0.05\) unless otherwise specified, rather than extensively tuning it for each benchmark.

\begin{table*}[t]
\centering
\caption{
\textbf{Ablation Study on $M$}. We compare Best-of-\(N\) (\(N=16\)) on MathVision, MathVerse-VO, MathVista and MMStar. $\pm$ denotes the standard deviation across three random training seeds.
}
\resizebox{\textwidth}{!}{
\begin{tabular}{lccccc}
\toprule

& \textbf{MathVision}
& \textbf{MathVerse-VO}
& \textbf{MathVista}
& \textbf{MMStar}
& \textbf{Average} \\

\midrule

$M=10$
& 35.30{\small\textcolor{improvegray}{$\pm 0.72$}}
& 51.42{\small\textcolor{improvegray}{$\pm 0.79$}}
& 68.97{\small\textcolor{improvegray}{$\pm 0.42$}}
& 68.49{\small\textcolor{improvegray}{$\pm 0.59$}}
& 56.05\\

$M=20$
& 35.86{\small\textcolor{improvegray}{$\pm 0.70$}}
& 51.40{\small\textcolor{improvegray}{$\pm 1.11$}}
& 69.40{\small\textcolor{improvegray}{$\pm 0.53$}}
& 69.33{\small\textcolor{improvegray}{$\pm 0.60$}}
& 56.50\\

$M=50$
& 35.89{\small\textcolor{improvegray}{$\pm 0.65$}}
& 51.35{\small\textcolor{improvegray}{$\pm 1.30$}}
& 69.52{\small\textcolor{improvegray}{$\pm 0.49$}}
& 68.99{\small\textcolor{improvegray}{$\pm 0.64$}}
& 56.44\\

$M=100$
& 35.87{\small\textcolor{improvegray}{$\pm 0.58$}}
& 51.46{\small\textcolor{improvegray}{$\pm 0.83$}}
& 69.34{\small\textcolor{improvegray}{$\pm 0.60$}}
& 69.62{\small\textcolor{improvegray}{$\pm 0.39$}}
& 56.57\\

\bottomrule
\end{tabular}
}
\label{tab:ablation_app_m}
\end{table*}

\begin{table*}
\centering
\caption{
\textbf{Ablation Study on} $\lambda_\text{prior}$. We compare Best-of-\(N\) (\(N=16\)) on MathVision, MathVerse-VO, MathVista and MMStar. $\pm$ denotes the standard deviation across three random training seeds.
}
\resizebox{\textwidth}{!}{
\begin{tabular}{lccccc}
\toprule

& \textbf{MathVision}
& \textbf{MathVerse-VO}
& \textbf{MathVista}
& \textbf{MMStar}
& \textbf{Average} \\

\midrule

$\lambda_\text{prior}=0.1$
& 34.93{\small\textcolor{improvegray}{$\pm 0.49$}}
& 49.84{\small\textcolor{improvegray}{$\pm 0.53$}}
& 68.20{\small\textcolor{improvegray}{$\pm 0.73$}}
& 67.77{\small\textcolor{improvegray}{$\pm 0.43$}}
& 55.19\\

$\lambda_\text{prior}=1$
& 35.39{\small\textcolor{improvegray}{$\pm 0.79$}}
& 50.27{\small\textcolor{improvegray}{$\pm 0.88$}}
& 68.85{\small\textcolor{improvegray}{$\pm 0.56$}}
& 69.12{\small\textcolor{improvegray}{$\pm 0.93$}}
& 55.91\\

$\lambda_\text{prior}=10$
& 35.86{\small\textcolor{improvegray}{$\pm 0.70$}}
& 51.40{\small\textcolor{improvegray}{$\pm 1.11$}}
& 69.40{\small\textcolor{improvegray}{$\pm 0.53$}}
& 69.33{\small\textcolor{improvegray}{$\pm 0.60$}}
& 56.50\\

$\lambda_\text{prior}=50$
& 35.41{\small\textcolor{improvegray}{$\pm 1.09$}}
& 51.02{\small\textcolor{improvegray}{$\pm 0.69$}}
& 68.82{\small\textcolor{improvegray}{$\pm 0.64$}}
& 69.24{\small\textcolor{improvegray}{$\pm 0.48$}}
& 56.12\\

\bottomrule
\end{tabular}
}
\label{tab:ablation_app_prior}
\end{table*}

\textbf{Effect of $M$.} The ensemble size \(M\) determines the number of reward heads. When \(M\) is too small, the reward hypothesis space may not be sufficiently covered, whereas an excessively large \(M\) may introduce redundant reward heads. We vary \(M\) over \(\{10,20,50,100\}\), and report the test-time scaling performance in Table~\ref{tab:ablation_app_m}. We observe that the performance starts to fluctuate when \(M>20\). Therefore, we set \(M=20\) in our experiments.

\textbf{Effect of} $\lambda_\text{prior}$\textbf{.} $\lambda_\text{prior}$ controls the contribution of the random prior network to each reward head. When $\lambda_\text{prior}$ is too small, the reward heads lack sufficient diversity. However, an overly large $\lambda_\text{prior}$ may cause the random prior to dominate the learned reward prediction and degrade reward accuracy. We vary \(\lambda_{\text{prior}}\) over \(\{0.1,1,10,50\}\). As shown in Table~\ref{tab:ablation_app_prior}, the test-time scaling performance degrades when \(\lambda_{\text{prior}}\) is either too small or too large. Therefore, we set \(\lambda_{\text{prior}}=10\) throughout our experiments.

\section{Computational Overhead}

BayesianPRM introduces only a small parameter overhead over the standard PRM. The standard PRM contains 7.944B parameters, while BayesianPRM contains 7.982B parameters, corresponding to a 0.48\% parameter overhead. 

We further compare the training cost on 4 \(\times\) H100 GPUs. Training the Standard PRM for one epoch takes approximately 24 hours, whereas the complete training pipeline of BayesianPRM requires less than 20 hours. This difference mainly arises because BayesianPRM trains the reward ensemble on 80\% of the data, while the remaining 20\% is used to train the lightweight belief network with the backbone frozen. Therefore, in our training setup, BayesianPRM requires less wall-clock training time than Standard PRM.

\section{Case Study}

In this section, we provide a case study of the step-wise rewards assigned by BayesianPRM, as shown in Figure~\ref{fig:case_study}. Figure~\ref{fig:case_study}   (a) presents a correct reasoning trajectory, where BayesianPRM assigns consistently high rewards to all reasoning steps. Figure~\ref{fig:case_study} (b) presents an incorrect trajectory. At the first erroneous step (Step 8), the reward exhibits a sharp drop from 0.977 to 0.041. These examples illustrate that BayesianPRM can assign high rewards to valid reasoning while identifying erroneous intermediate steps with substantially lower rewards.

\begin{figure}[t]
\centering
\setlength{\tabcolsep}{3pt}

\begin{tcolorbox}[
    colback=white,
    colframe=casegreen,
    boxrule=0.9pt,
    arc=0pt,
    left=3pt,
    right=3pt,
    top=3pt,
    bottom=3pt
]

\begin{minipage}[t]{0.3\linewidth}
\vspace{0pt}
    \centering
    \includegraphics[width=\linewidth]{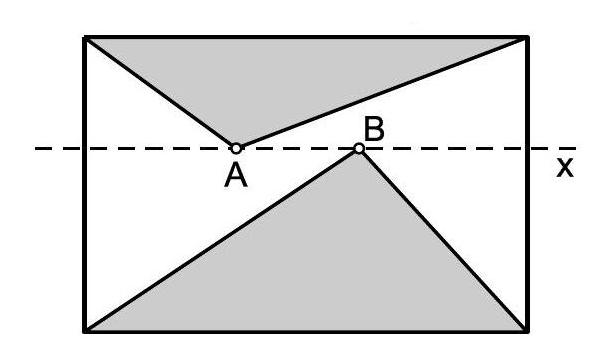}
\end{minipage}
\hfill
\begin{minipage}[t]{0.7\linewidth}
\vspace{5pt}
    \scriptsize
    \textbf{Question:}
    The sum of the areas of the two shaded triangles is
    $10\,\mathrm{cm}^2$. What is the area of the rectangle?

    \vspace{2pt}
    \textbf{Choices:}
    (A) 18 \quad
    (B) 20 \quad
    (C) 22 \quad
    (D) 24

    \vspace{2pt}
    \textbf{Ground Truth:} B

    \textbf{Benchmark:} MathVision
\end{minipage}

\vspace{4pt}

\scriptsize
\begin{tabularx}{\linewidth}{@{}Xr@{}}
\textbf{Step 1:}
Let the rectangle have width $W$ and height $H$.
&
\prmgreen{0.895}
\\[2pt]

\textbf{Step 2:}
Let $h_1$ and $h_2$ be the distances from line $x$
to the top and bottom sides, so $h_1+h_2=H$.
&
\prmgreen{0.906}
\\[2pt]

\textbf{Step 3:}
The upper triangle has area $\frac{1}{2}Wh_1$.
&
\prmgreen{0.944}
\\[2pt]

\textbf{Step 4:}
The lower triangle has area $\frac{1}{2}Wh_2$.
&
\prmgreen{0.968}
\\[2pt]

\textbf{Step 5:}
Their total area is
$\frac{1}{2}W(h_1+h_2)=\frac{1}{2}WH$.
&
\prmgreen{0.968}
\\[2pt]

\textbf{Step 6:}
Since the shaded area is $10$,
$\frac{1}{2}WH=10$.
&
\prmgreen{0.958}
\\[2pt]

\textbf{Step 7:}
Therefore, $WH=20$.
&
\prmgreen{0.992}
\\[2pt]

\textbf{Step 8:}
The rectangle has area $20\,\mathrm{cm}^2$.
&
\prmgreen{0.986}
\\[2pt]

\textcolor{casegreen}{\textbf{Final Answer: B. $20\,\mathrm{cm}^2$ \checkmark}}
\end{tabularx}

\end{tcolorbox}
\vspace{2pt}
\centerline{\textbf{(a)} A correct case.}
\vspace{4pt}

\begin{tcolorbox}[
    colback=white,
    colframe=casered,
    boxrule=0.9pt,
    arc=0pt,
    left=3pt,
    right=3pt,
    top=3pt,
    bottom=3pt
]

\begin{minipage}[t]{0.30\linewidth}
\vspace{0pt}
    \centering
    \includegraphics[width=\linewidth]{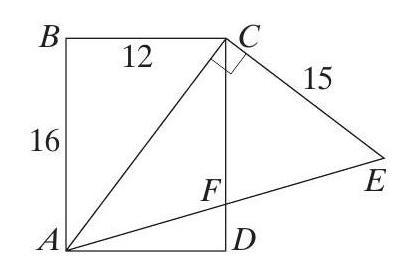}
\end{minipage}
\hfill
\begin{minipage}[t]{0.66\linewidth}
\vspace{5pt}
    \scriptsize
    \textbf{Question:}
    In rectangle $ABCD$, $AB=16$, $BC=12$, $AC\perp CE$, and $CE=15$.
    If $F=AE\cap CD$, find the area of triangle $ACF$.

    \vspace{2pt}
    \textbf{Ground Truth:} $75$

    \textbf{Benchmark:} MathVision
\end{minipage}

\vspace{4pt}

\scriptsize
\begin{tabularx}{\linewidth}{@{}Xr@{}}

\textbf{Step 1:}
Identify the geometric setting: rectangle $ABCD$ with
$AB=16$, $BC=12$, $AC\perp CE$, $CE=15$, and
$F=AE\cap CD$.
&
\prmgreen{0.785}
\\[2pt]

\textbf{Step 2:}
Compute the diagonal:
$AC=\sqrt{16^2+12^2}=20$.
&
\prmgreen{0.865}
\\[2pt]

\textbf{Step 3:}
Since $AC\perp CE$ and $CE=15$, triangle $ACE$ is
a right triangle.
&
\prmgreen{0.922}
\\[2pt]

\textbf{Step 4:}
Set coordinates
$A=(0,0)$, $B=(0,16)$, $C=(12,16)$, and $D=(12,0)$.
&
\prmgreen{0.943}
\\[2pt]

\textbf{Step 5:}
Because the slope of $AC$ is $\frac{4}{3}$,
the slope of $CE$ is $-\frac{3}{4}$.
Using the perpendicular condition and $CE=15$,
solve for $E$.
&
\prmgreen{0.960}
\\[2pt]

\textbf{Step 6:}
This gives two candidates, and the one consistent with the figure is
$E=(24,7)$.
&
\prmgreen{0.913}
\\[2pt]

\textbf{Step 7:}
Hence line $AE$ has equation
$y=\frac{7}{24}x$.
&
\prmgreen{0.977}
\\[2pt]

\textcolor{casered}{
\textbf{Step 8:}
Since $CD$ is the vertical line $x=12$, substitute into
$y=\frac{7}{24}x$ and obtain
$y=\frac{7}{24}\times 12 = 7$.
Thus, $F=(12,7)$.
}
&
\prmred{0.041}
\\[2pt]

\textcolor{casered}{\textbf{Step 9:}
Using $A=(0,0)$, $C=(12,16)$, and $F=(12,7)$,
compute the area of triangle $ACF$ to be
$54$.}
&
\prmred{0.017}
\\[2pt]

\textcolor{casered}{
\textbf{Final Answer: $54$ $\times$}
}

\end{tabularx}

\end{tcolorbox}
\vspace{2pt}
\centerline{\textbf{(b)} An incorrect case.}

\caption{
A case study of BayesianPRM's step-wise evaluation.
}
\label{fig:case_study}

\end{figure}

\section{Limitations}

One limitation of our method is that several important hyperparameters, such as \(\beta_2\), need to be manually specified. However, the degree of reward overestimation can vary across datasets and test-time rollout policies, leading to different optimal hyperparameter values. Moreover, the current conservatism-aware belief calibration performs a one-sided correction toward lower rewards and is therefore most effective when the prediction error is dominated by overestimation. Developing adaptive methods that automatically determine both the strength and direction of calibration would be an interesting direction for future work.

\end{document}